\newtheorem{theorem}{Theorem}
\newtheorem{definition}{Definition}
\newtheorem{algorithm}{Algorithm}
\newtheorem{lemma}[theorem]{Lemma}
\newcommand{\beq}{\begin{equation}}
\newcommand{\eeq}{\end{equation}}
\newcommand{\beqa}{\begin{eqnarray}}
\newcommand{\eeqa}{\end{eqnarray}}
\newcommand{\bit}{\begin{itemize}}
\newcommand{\eit}{\end{itemize}}
\newcommand{\ben}{\begin{enumerate}}
\newcommand{\een}{\end{enumerate}}
\newcommand{\mc}{\mathcal}
\newcommand{\mb}{\mathbb}
\newcommand{\bed}{\begin{displaymath}}
\newcommand{\eed}{\end{displaymath}}
\newtheorem{thm}{Theorem}
\newtheorem{lem}[thm]{Lemma}
\newtheorem{rem}[thm]{Remark}
\tikzstyle{line}=[draw]
\begin{document}

\title{A Non-Binary Associative Memory with Exponential Pattern Retrieval Capacity and Iterative Learning }

\author{\IEEEauthorblockN{Amir Hesam Salavati$^\dagger$, K. Raj Kumar$^\ddagger$, and Amin Shokrollahi$^\dagger$}
\IEEEauthorblockA{$\dagger$:Laboratoire d'algorithmique (ALGO)\\
Ecole Polytechnique F\'{e}d\'{e}rale de Lausanne (EPFL), 1015
Lausanne, Switzerland\\ E-mail:
\{hesam.salavati,amin.shokrollahi\}@epfl.ch}
\IEEEauthorblockA{$\ddagger$:Qualcomm Research India\\
Bangalore - 560066, India\\ E-mail:
kumarraj@qti.qualcomm.com}}

\maketitle

\begin{abstract}
We consider the problem of neural association for a network of
non-binary neurons. Here, the task is to first memorize a set of patterns using a network of neurons whose states
assume values from a finite number of integer levels. Later, the same network should be able to recall previously memorized patterns from their noisy versions. Prior work in this area
consider storing a finite number of
{\em purely random} patterns, and have shown that the pattern
retrieval capacities (maximum number of patterns that can be
memorized) scale only linearly with the number of neurons in the
network.  

In our formulation of the problem, we concentrate on exploiting redundancy and internal structure of the patterns in order to improve the pattern retrieval capacity. Our first result shows that if the given patterns have a suitable linear-algebraic structure, i.e. comprise a sub-space of the set of all possible patterns, then the pattern retrieval capacity is in fact exponential in terms of the number of neurons. The second result extends the previous finding to cases where the patterns have weak minor components, i.e.
the smallest eigenvalues of the correlation matrix tend toward zero. We will use these minor components (or the basis vectors of the pattern null space) to both increase the pattern
retrieval capacity and error correction capabilities. 

An iterative algorithm is proposed for the learning phase, and two simple neural update algorithms are presented for the recall phase. Using analytical results and simulations,
we show that the proposed methods can tolerate a fair amount of errors in the input while being able to memorize an exponentially large number of patterns. 
\end{abstract}
\begin{keywords}
Neural associative memory, Error correcting codes, message passing, stochastic learning, dual-space method
\end{keywords}

\section{Introduction}\label{section_introduction}
Neural associative memory is a particular class of neural networks capable of memorizing (learning) a set of patterns and recalling them later in presence of noise, i.e. retrieve the
correct memorized pattern from a given noisy version. Starting from the seminal work of Hopfield in 1982 \cite{hopfield}, various artificial neural networks have been designed to mimic the
task of the neuronal associative memory (see for instance \cite{venkatesh}, \cite{Jankowski}, \cite{Muezzinoglu1}, \cite{SKGS}, \cite{gripon_sparse}). 

In essence, the neural associative memory problem is very similar to the one faced in communication systems where the goal is to reliably and efficiently retrieve a set of patterns
(so called codewords) form noisy versions. More interestingly, the
techniques used to implement an artificial neural associative memory looks very similar to some of the methods used in graph-based modern codes to decode information. This makes the
pattern retrieval phase in neural associative memories very similar to iterative decoding techniques in modern coding theory. 

However, despite the similarity in the task and techniques employed in both problems, there is a huge gap in terms of efficiency. Using binary codewords of length $n$, one can construct codes that are
capable of reliably transmitting $2^{rn}$ codewords over a noisy channel, where $ 0 < r < 1$ is the code rate \cite{urbanke}. The optimal $r$ (i.e. the
largest possible value that permits the almost sure recovery of transmitted codewords from the corrupted received versions) depends on the noise characteristics of the channel and is known as the Shannon capacity \cite{shannon}. In fact, the Shannon capacity is achievable in certain cases, for example by LDPC codes over AWGN channels. 

In current neural associative memories, however, with a network of size $n$ one can only memorize $O(n)$ binary patterns of length $n$ \cite{mceliece}, \cite{venkatesh}. To be fair, it must
be mentioned that these networks are designed such that they are able to memorize any possible set of \emph{randomly} chosen patterns (with size $O(n)$ of course) (e.g., \cite{hopfield},
\cite{venkatesh}, \cite{Jankowski}, \cite{Muezzinoglu1}). Therefore, although humans cannot memorize random patterns, these methods provide artificial neural associative memories with a
pleasant sense of generality.

However, this generality severely restricts the efficiency of the network since even if the input patterns have some internal redundancy or structure, current neural associative memories
could not exploit this redundancy in order to increase the number of memorizable patterns or improve error correction during the recall phase. In fact, concentrating on redundancies within
patterns is a fairly new viewpoint. This point of view is in harmony to coding techniques where one designs codewords with certain degree of redundancy and then use this redundancy to
correct corrupted signals at the receiver's side. 

In this paper, we focus on bridging the performance gap between the coding techniques and neural associative memories. Our proposed neural network exploits the inherent structure of the input
patterns in order to increase the pattern retrieval capacity from $O(n)$ to $O(a^n)$ with $a >1$. More specifically, the proposed neural network is capable of
learning and reliably recalling given patterns when they come from a subspace with dimension $k < n$ of all possible $n$-dimensional patterns. Note that although the proposed model does not
have the versatility of traditional associative memories to handle any set of inputs, such as the Hopfield network \cite{hopfield}, it enables us to boost the capacity by a great extent in
cases where there is some input redundancy. In contrast, traditional associative memories will still have linear pattern retrieval capacity even if the patterns good linear algebraic structures. 

In \cite{KSS}, we presented some preliminary results in which two efficient recall algorithms were proposed for the case where the neural graph had the structure of an expander
\cite{expander_ref}. Here, we extend the previous results to general sparse neural graphs as well as proposing a simple learning algorithm to capture the internal structure of the patterns (which will be used later in the recall phase).

The remainder of this paper is organized as follows: In Section \ref{section_formulation}, we will discuss the neural model used in this paper and formally define the associative memory problem. We explain
the proposed learning algorithm in Section \ref{section_learning}. Sections \ref{section_recall} and \ref{section_error_analysis} are respectively dedicated to the recall algorithm and analytically investigating its performance in
retrieving corrupted patterns. In Section \ref{section_capacity} we address the pattern retrieval capacity and show that it is exponential in $n$. Simulation results are discussed in Section
\ref{section_simulations}. Section \ref{section_conclusion} concludes the paper and discusses future research topics. Finally, the Appendices contain some extra remarks as well as the proofs for certain lemmas and theorems.

\section{Problem Formulation and the Neural Model}\label{section_formulation}
\subsection{The Model}
In the proposed model, we work with neurons whose states are integers from a finite set of non-negative values $\mc{Q} = \{0,1,\dots,Q-1\}$. A natural way of interpreting this model is to
think of the integer states as the short-term firing rate of neurons (possibly quantized). In other words, the state of a neuron in this model indicates the number of spikes fired by the neuron in a fixed short
time interval. 

Like in other neural networks, neurons can only perform simple operations. We consider neurons that can do \emph{linear summation} over the input and possibly apply a \emph{non-linear function}
(such as thresholding) to produce the output. More specifically, neuron $x$ updates its state based on the states of
its neighbors $\{s_i\}_{i=1}^{n}$ as follows:
\begin{enumerate}
\item It computes the weighted sum
$ h = \sum_{i=1}^{n} w_i s_i,$ 
where $w_i$ denotes the weight of the input link from the $i^{th}$ neighbor.
\item It updates its state as $x = f(h),$
where $f: \mb{R} \rightarrow \mc{Q}$ is a possibly non-linear function
from the field of real numbers $\mb{R}$ to $\mc{Q}$.
\end{enumerate}
We will refer to these two as "neural operations" in the sequel.

\subsection{The Problem}
The neural associative memory problem consists of two parts: learning and pattern retrieval. 
\subsubsection{The learning phase}
We assume to be given $C$ vectors of length $n$ with integer-valued entries belonging to $\mc{Q}$. Furthermore, we assume these patterns belong to a subspace of $\mc{Q}^n$ with dimension $k
\leq n$. Let $\mc{X}_{C \times n}$ be the matrix that contains the set of patterns in its rows. Note that if $k = n$, then we are back to the original associative memory problem. However, our focus will beon the case where $k <n$, which will be shown to yield much larger pattern retrieval capacities. Let us denote the model specification by a triplet $(\mc{Q},n,k)$.

The learning phase then comprises a set of steps to determine the connectivity of the neural graph (i.e. finding a set of weights) as a function of the training patterns in $\mc{X}$ such that these patterns are stable states of the recall process. More specifically, in the learning phase we would like to memorize the patterns in $\mc{X}$ by finding a set of non-zero vectors $w_1,\dots,w_m \in \mb{R}^n$ that are orthogonal to the set of given patterns. Remark here that such vectors exist (for instance the basis of the null-space). 

Our interest is to come up with a neural scheme to determine these vectors. Therefore, the inherent structure of the patterns are captured in the obtained null-space vectors, denoted by the matrix $W \in \mb{R}^{m
\times n}$, whose $i^{\mbox{th}}$ row is $w_i$. This matrix can be interpreted as the adjacency matrix of a bipartite graph which represents our neural network. The graph is comprised on pattern and constraint neurons (nodes). Pattern neurons, as they name suggest, correspond to the states of the patterns we would like to learn or recall. The constrain neurons, on the other hand, should verify if the current pattern belongs to the database $\mc{X}$. If not, they should send proper feedback messages to the pattern neurons in order to help them converge to the correct pattern in the dataset. The overall network model is shown in Figure \ref{single_level_net}. 
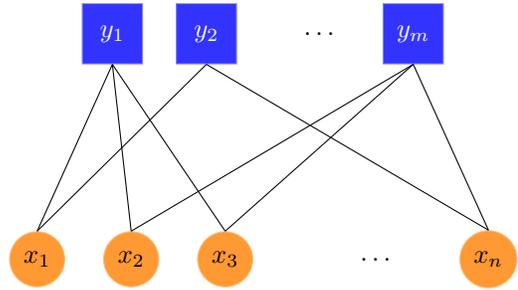
\begin{figure}
\centering
\begin{tikzpicture}
\node at (0,0)[yshift=1.5cm,xshift=-2.5cm,rectangle,,draw=blue!50,fill=blue!80,minimum size=8mm] (c1) {\textcolor{white}{$y_1$}};
\node at (0,0)[yshift=1.5cm,xshift=-1.25cm,rectangle,,draw=blue!50,fill=blue!80,minimum size=8mm] (c2) {\textcolor{white}{$y_2$}};
\node at (0,0)[yshift=1.5cm,xshift=.25cm] (dot2) {$\dots$};
\node at (0,0)[yshift=1.5cm,xshift=1.5cm,rectangle,,draw=blue!50,fill=blue!80,minimum size=8mm] (cm) {\textcolor{white}{$y_m$}};

\node at (0,0)[yshift=-1.5cm,xshift=-3.5cm,circle,,draw=orange!50,fill=orange!80,minimum size=5mm] (p1) {$x_1$};
\node at (0,0)[yshift=-1.5cm,xshift=-2.25cm,circle,,draw=orange!50,fill=orange!80,minimum size=5mm] (p2) {$x_2$};
\node at (0,0)[yshift=-1.5cm,xshift=-1cm,circle,,draw=orange!50,fill=orange!80,minimum size=5mm] (p3) {$x_3$};
\node at (0,0)[yshift=-1.5cm,xshift=1cm] (dot) {$\dots$};
\node at (0,0)[yshift=-1.5cm,xshift=2.5cm,circle,,draw=orange!50,fill=orange!80,minimum size=5mm] (pn) {$x_n$};

\draw[line] (p1.north)--(c1.south);
\draw[line] (p1.north)--(c2.south);
\draw[line] (p2.north)--(c1.south);
\draw[line] (p2.north)--(cm.south);
\draw[line] (p3.north)--(c1.south);
\draw[line] (p3.north)--(cm.south);
\draw[line] (pn.north)--(c2.south);
\draw[line] (pn.north)--(cm.south);
\end{tikzpicture}
\caption{A bipartite graph that represents the constraints on the training set.\label{single_level_net}}

\end{figure}


\subsubsection{The recall phase}
In the recall phase, the neural network should retrieve the correct memorized pattern from a possibly corrupted version. In this case, the states of the pattern neurons $x_1,x_2,\dots,x_n$
are initialized with the given (noisy) input pattern. Here, we assume that the noise is integer valued and additive\footnote{It must be mentioned that neural states below $0$ and above
$Q-1$ will be clipped to $0$ and $Q-1$, respectively. This is biologically justified as the firing rate of neurons can not exceed an upper bound and of course can not be less than zero.}. Therefore, assuming the input to the network is a corrupted version of pattern $x^\mu$, the state of the pattern nodes are $x = x^\mu
+ z$, where $z$ is the noise. Now the neural network should use the given states together with the fact that $Wx^\mu = 0$ to retrieve pattern $x^\mu$, i.e. it should estimate $z$ from $Wx =
Wz$ and return $x^{\mu} = x-z$. Any algorithm designed for this purpose should be simple enough to be implemented by neurons. Therefore, our objective is to find a simple algorithm capable of
eliminating noise using only neural operations.
\subsection{Related Works}\label{sec:related}
Designing a neural associative memory has been an active area of research for the past three decades. Hopfield was the first to design an artificial neural associative memory in his seminal work in
1982 \cite{hopfield}. The so-called Hopfield network is inspired by Hebbian learning \cite{hebb} and is composed of binary-valued ($\pm 1$) neurons, which together are able to memorize a
certain number of patterns. In our terminology, the Hopfield network corresponds to a $(\{-1,1\},n,n)$ neural model. The pattern retrieval capacity of a Hopfield network of $n$ neurons was
derived later by Amit et al. \cite{amit} and shown to be $0.13n$, under vanishing bit error probability requirement. Later, McEliece et al. \cite{mceliece} proved that under the requirement
of vanishing pattern error probability, the capacity of Hopfield networks  is $n/(2\log(n))) = O(n/\log(n))$. 

In addition to neural networks with online learning capability, offline methods have also been used to design neural associative memories. For instance, in \cite{venkatesh} the authors
assume the complete set of pattern is given in advance and calculate the weight matrix using the pseudo-inverse rule \cite{hertz} offline. In return, this approach helps them improve the
capacity of a Hopfield network to $n/2$, under vanishing pattern error probability condition, while being able to correct \emph{one bit} of error in the recall phase. Although this is a
significant improvement to the $n/\log(n)$ scaling of the pattern retrieval capacity in \cite{mceliece}, it comes at the price of much higher computational complexity and the lack of
gradual learning ability. 

While the connectivity graph of a Hopfield network is a complete graph, Komlos and Paturi \cite{Komlos} extended the work of McEliece to sparse neural graphs. Their results are of
particular interest as physiological data is also in favor of sparsely interconnected neural networks. They have considered a network in which each neuron is connected to $d$ other neurons,
i.e., a $d$-regular network. Assuming that the network graph satisfies certain connectivity measures, they prove that it is possible to store a linear number of \emph{random} patterns (in
terms of $d$) with vanishing bit error probability or $C = O(d/\log n)$ random patterns with vanishing pattern error probability. Furthermore, they show that in spite of the capacity
reduction, the error correction capability remains the same as the network can still tolerate a number of errors which is linear in $n$.

It is also known that the capacity of neural associative memories could be enhanced if the patterns are of \emph{low-activity} nature, in the sense that at any time instant many of the
neurons are silent \cite{hertz}. However, even these schemes fail when required to correct a fair amount of erroneous bits as the information retrieval is not better compared to that of
normal networks.

Extension of associative memories to non-binary neural models has also been explored in the past. Hopfield addressed the case of continuous neurons and showed that similar to the binary
case, neurons with states between $-1$ and $1$ can memorize a set of random patterns, albeit with less capacity \cite{hopfield_non_binary}. Prados and Kak considered a digital version of non-binary neural networks in which neural states could assume integer (positive and negative) values \cite{prados_non_binary}. They show that the storage capacity of such networks are in general larger than their binary peers. However, the capacity would still be less than $n$ in the sense that the proposed neural network can not have more than $n$ patterns that are stable states of the network, let alone being able to retrieve the correct pattern from corrupted input queries.

In \cite{Jankowski} the authors investigated a multi-state complex-valued neural associative memory for which the estimated capacity is $C < 0.15 n$. Under the same model but using a different learning method, Muezzinoglu et al.
\cite{Muezzinoglu1} showed that the capacity can be increased to $C = n$. However the complexity of the weight computation mechanism is prohibitive. To overcome this drawback, a Modified
Gradient Descent learning Rule (MGDR) was devised in \cite{Lee}. In our terminology, all these models are $(\{e^{2\pi j s /k}|0 \leq s \leq k-1 \},n,n)$ neural associative memories. 

Given that even very complex offline learning methods can not improve the capacity of binary or multi-sate neural associative memories, a group of recent works has made considerable efforts
to exploit the inherent structure of the patterns in order to increase capacity and improve error correction capabilities. Such methods focus merely on memorizing those patterns that have
some sort of inherent redundancy. As a result, they differ from previous methods in which the network was deigned to be able to memorize any random set of patterns. Pioneering this
approach, Berrou and Gripon \cite{gripon_istc} achieved considerable improvements in the pattern retrieval capacity of Hopfield networks, by utilizing Walsh-Hadamard sequences.
Walsh-Hadamard sequences are a particular type of low correlation sequences and were initially used in CDMA communications to overcome the effect of noise. The only slight downside to the
proposed method is the use of a decoder based on the winner-take-all approach which requires a separate neural stage, increasing the complexity of the overall method. Using low correlation
sequences has also been considered in \cite{SKGS}, where the authors introduced two novel mechanisms of neural association that employ binary neurons to memorize patterns belonging to
another type of low correlation sequences, called Gold family \cite{gold}. The network itself is very similar to that of Hopfield, with a slightly modified weighting rule. Therefore,
similar to a Hopfield network, the complexity of the learning phase is small. However, the authors failed to increase the pattern retrieval capacity beyond $n$ and it was shown that the
pattern retrieval capacity of the proposed model is $C=n$, while being able to correct a fair number of erroneous input bits.

Later, Gripon and Berrou came up with a different approach based on neural cliques, which increased the pattern retrieval capacity to $O(n^2)$ \cite{gripon_sparse}. Their method is based on
dividing a neural network of size $n$ into $c$ clusters of size $n/c$ each. Then, the messages are chosen such that only one neuron in each cluster is active for a given message. Therefore,
one can think of messages as a random vector of length $c \log (n/c)$, where the $\log (n/c)$ part specifies the index of the active neuron in a given cluster. The authors also provide a
learning algorithm, similar to that of Hopfield, to learn the pair-wise correlations within the patterns. Using this technique and exploiting the fact that the resulting patterns are very
sparse, they could boost the capacity to $O(n^2)$ while maintaining the computational simplicity of Hopfield networks.

In contrast to the pairwise correlation of the Hopfield model, Peretto et al. \cite{peretto} deployed \emph{higher order} neural models: the models in which the state of the
neurons not only depends on the state of their neighbors, but also on the correlation among them. Under this model, they showed that the storage capacity of a higher-order Hopfield network
can be improved to $C=O(n^{p-2})$, where $p$ is the degree of correlation considered. The main drawback of this model is the huge computational complexity required in the learning
phase, as one has to keep track of $O(n^{p-2})$ neural links and their weights during the learning period. 

Recently, the present authors introduced a novel model inspired by modern coding techniques in which a neural bipartite graph is used to memorize the patterns that belong to a subspace
\cite{KSS}. The proposed model can be also thought of as a way to capture higher order correlations in given patterns while keeping the computational complexity to a minimal level (since
instead of $O(n^{p-2})$ weights one needs to only keep track of $O(n^2)$ of them). Under the assumptions that the bipartite graph is known, sparse, and expander, the proposed algorithm
increased the pattern retrieval capacity to $C=O(a^n)$, for some $a > 1$, closing the gap between the pattern retrieval capacities achieved in neural networks and that of coding techniques. For completeness, this approach is presented in the appendix (along with the detailed proofs). The main drawbacks in the proposed approach were the lack of a learning algorithm as well as the expansion assumption on the neural graph.

In this paper, we focus on extending the results described in \cite{KSS} in several directions: first, we will suggest an iterative learning algorithm, to find the neural connectivity
matrix from the patterns in the training set. Secondly, we provide an analysis of the proposed error correcting algorithm in the recall phase and investigate its performance as a function
of input noise and network model. Finally, we discuss some variants of the error correcting method which achieve better performance in practice.  

It is worth mentioning that an extension of this approach to a multi-level neural network is considered in \cite{SK_ISIT2012}. There, the novel structure enables better error correction.
However, the learning algorithm lacks the ability to learn the patterns one by one and requires the patterns to be presented all at the same time in the form of a big matrix. In \cite{KSS_ICML2013} we have further extended this approach to a modular single-layer architecture with online learning capabilities. The modular structure makes the recall algorithm much more efficient while the online learning enables the network to learn gradually from examples. The learning algorithm proposed in this paper is also virtually the same as the one we proposed in \cite{KSS_ICML2013}, giving it the advantage of 

Another important point to note is that learning linear constraints by a neural network is hardly a new topic as one can learn a matrix orthogonal to a set of patterns in the training set
(i.e., $Wx^\mu = 0$) using simple neural learning rules (we refer the interested readers to \cite{xu} and \cite{oja}). However, to the best of our knowledge, finding such a matrix subject
to the sparsity constraints has not been investigated before. This problem can also be regarded as an instance of compressed sensing \cite{candes}, in which the measurement matrix is given
by the big patterns matrix $\mc{X}_{C \times n}$ and the set of measurements are the constraints we look to satisfy, denoted by the tall vector $b$, which for simplicity reasons we assume
to be all zero. Thus, we are interested in finding a sparse vector $w$ such that $\mc{X}w = 0$. Nevertheless, many decoders proposed in this area are very complicated and cannot be
implemented by a neural network using simple neuron operations. Some exceptions are \cite{donoho_amp} and \cite{tropp} which are closely related to the learning algorithm proposed in this
paper. 


\subsection{Solution Overview}
Before going through the details of the algorithms, let us give an overview of the proposed solution. To learn the set of given patterns, we have adopted the neural learning algorithm
proposed in \cite{oja_proof} and modified it to favor sparse solutions. In each iteration of the algorithm, a random pattern from the data set is picked and the neural weights corresponding
to constraint neurons are adjusted is such a way that the projection of the pattern along the current weight vectors is reduced, while trying to make the weights sparse as well.  

In the recall phase, we exploit the fact that the learned neural graph is sparse and orthogonal to the set of patterns. Therefore, when a query is given, if it is not orthogonal to the connectivity
matrix of the weighted neural graph, it is noisy. We will use the sparsity of the neural graph to eliminate this noise using a simple iterative algorithm. In each iteration, there is a set
of violated constraint neurons, i.e. those that receive a non-zero sum over their input links. These nodes will send feedback to their corresponding neighbors among the pattern neurons,
where the feedback is the sign of the received input-sum. At this point, the pattern nodes that receive feedback from a majority of their neighbors update their state according to the sign
of the sum of received messages. This process continues until noise is eliminated completely or a failure is declared. 

In short, we propose a neural network with online learning capabilities which uses only neural operations to memorize an exponential number of patterns.

\section{Learning Phase}\label{section_learning}
Since the patterns are assumed to be coming from a subspace in the $n$-dimensional space, we adapt the algorithm proposed by Oja and Karhunen \cite{oja_proof} to learn the null-space basis
of the subspace defined by the patterns. In fact, a very similar algorithm is also used in \cite{xu} for the same purpose. However, since we need the basis vectors to be sparse (due to
requirements of the algorithm used in the recall phase), we add an additional term to penalize non-sparse solutions during the learning phase.

Another difference with the proposed method and that of \cite{xu} is that the learning algorithm proposed in \cite{xu} yields dual vectors that form an orthogonal set. Although one can
easily extend our suggested method to such a case as well, we find this requirement unnecessary in our case. This gives us the additional advantage to make the algorithm \emph{parallel} and
\emph{adaptive}. Parallel in the sense that we can design an algorithm to learn one constraint and repeat it several times in order to find all constraints with high probability. And
adaptive in the sense that we can determine the number of constraints on-the-go, i.e. start by learning just a few constraints. If needed (for instance due to bad performance in the recall
phase), the network can easily learn additional constraints. This increases the flexibility of the algorithm and provides a nice trade-off between the time spent on learning and the
performance in the recall phase. Both these points make an approach biologically realistic. 

It should be mentioned that the core of our learning algorithm here is virtually the same as the one we proposed in \cite{KSS_ICML2013}. 

\subsection{Overview of the proposed algorithm} 
The problem to find one sparse constraint \emph{vector} $w$ is given by equations (\ref{main_problem_modified_objective}), (\ref{main_problem_modified_constraint}), in which pattern $\mu$ is denoted by $x^\mu$.
%

\begin{subequations}\label{main_problem_modified}
\beq\label{main_problem_modified_objective}
\min \sum_{\mu =1 }^Cs \vert x^\mu \cdot w \vert^2 + \eta g(w).
\eeq
subject to:
\beq\label{main_problem_modified_constraint}
\Vert w \Vert_2 = 1
\eeq
\end{subequations}
In the above problem, $\cdot$ is the inner-product, $\Vert . \Vert_2$ represent the $\ell_2$ vector norm, $g(w)$ a penalty function to encourage sparsity and $\eta$ is a positive
constant. There are various ways to choose $g(w)$. For instance one can pick $g(w)$ to be $\Vert . \Vert_1$, which leads to $\ell_1$-norm penalty and is widely used in compressed sensing
applications \cite{donoho_amp}, \cite{tropp}. Here, we will use a different penalty function, as explained later. 

To form the basis for the null space of the patterns, we need $m=n-k$ vectors, which we can obtain by solving the above problem several times, each time from a random initial
point\footnote{It must be mentioned that in order to have exactly $m = n-k$ linearly independent vectors, we should pay some additional attention when repeating the proposed method several
time. This issue is addressed later in the paper.}.

As for the sparsity penalty term $g(w)$ in this problem, in this paper we consider the function
\begin{displaymath}
g(w) = \sum_{i = 1}^n \tanh(\sigma w_i^2),
\end{displaymath}
where $\sigma$ is chosen appropriately. Intuitively, $\tanh(\sigma w_i^2)$ approximates $|\hbox{sign}(w_i)|$ in $\ell_0$-norm. Therefore, the larger $\sigma$ is, the closer $g(w)$ will be
to $\Vert. \Vert_0$. By calculating the derivative of the objective function, and by considering the update due to each randomly picked pattern $x$, we will get the following iterative
algorithm:
\begin{subequations}\label{learning_sparse_null_one}
\beq\label{SGA_cost_modified}
y(t) = x(t) \cdot w(t)
\eeq
\beq\label{SGA_learning_rule_modified}
\tilde{w}(t+1) = w(t) - \alpha_t \left(2 y(t) x(t)  + \eta \Gamma(w(t))\right)
\eeq
\beq\label{w(t)}
w(t+1) = \frac{\tilde{w}(t+1)}{\Vert \tilde{w}(t+1) \Vert_2}
\eeq
\end{subequations}
In the above equations, $t$ is the iteration number, $x(t)$ is the sample pattern chosen at iteration $t$ uniformly at random from the patterns in the training set $\mc{X}$, and $\alpha_t$
is a small positive constant. Finally, $\Gamma(w): \mc{R}^n \rightarrow \mc{R}^n = \nabla g(w)$ is the gradient of the penalty term for non-sparse solutions. This function has the
interesting property that for very small values of $w_i(t)$, $\Gamma(w_i(t)) \simeq 2 \sigma w_i(t)$. To see why, consider the $i^{th}$ entry of the function $\Gamma(w(t)))$
\bed
\Gamma_i(w(t)) = \partial g(w(t)) /\partial w_i(t) = 2\sigma_t w_i(t) (1-\tanh^2(\sigma w_i(t)^2))
\eed
It is easy to see that $\Gamma_i(w(t)) \simeq 2 \sigma w_i(t)$ for relatively small $w_i(t)$'s. And for larger values of $w_i(t)$, we get $\Gamma_i(w(t)) \simeq 0$ (see Figure
\ref{sparsity_penalty}). Therefore, by proper choice of $\eta$ and $\sigma$, equation (\ref{SGA_learning_rule_modified}) suppresses small entries of $w(t)$ by pushing them towards zero,
thus, favoring sparser results. To simplify the analysis, with some abuse of notation, we approximate the function $\Gamma(w^{(\ell)}(t))$ with the following function:
\beq\label{soft_threshold_Gamma}
\Gamma_i(w^{(\ell)}(t)) = \left\{ \begin{array}{ll}
        w_i^{(\ell)}(t) & \mbox{if $|w_i^{(\ell)}(t)| \leq \theta_t$};\\        
        0 & \mbox{otherwise},\end{array} \right.
\eeq
where $\theta_t$ is a small positive threshold. 

\begin{figure}[h]
\begin{center}
\includegraphics[width=.55\textwidth]{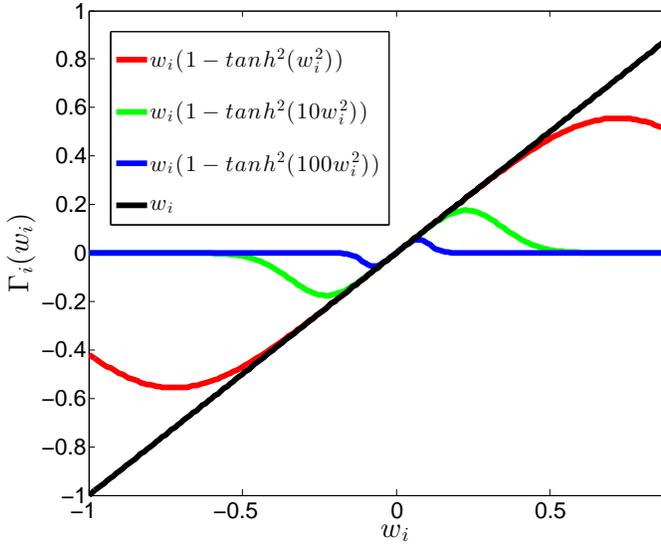}
\end{center}
\begin{center}
\caption{The sparsity penalty $\Gamma_i(w_i)$, which suppresses small values of the $i^{th}$ entry of $w$ in each iteration as a function of $w_i$ and $\sigma$. Note that the normalization
constant $2\sigma$ has been omitted here to make comparison with function $f = w_i$ possible. \label{sparsity_penalty}}
\end{center}
\end{figure}

Following the same approach as \cite{oja_proof} and assuming $\alpha_t$ to be small enough such that equation (\ref{w(t)}) can be expanded as powers of $\alpha_t$, we can approximate
equation (\ref{learning_sparse_null_one}) with the following simpler version:
\begin{subequations}\label{learning_sparse_null_main}
\beq\label{SGA_cost_main}
y(t) = x(t) \cdot w(t)
\eeq
\beq\label{learning_rule_final}
w(t+1) = w(t) - \alpha_t \left(y(t) \left(x(t) - \frac{y(t) w(t)}{\Vert w(t)\Vert_2^2}\right) + \eta \Gamma(w(t))  \right)
\eeq
\end{subequations}
In the above approximation, we also omitted the term $\alpha_t \eta \left(w(t)\cdot \Gamma(w(t))\right) w(t)$ since $w(t)\cdot \Gamma(w(t))$ would be negligible, specially as $\theta_t$ in equation (\ref{soft_threshold_Gamma}) becomes smaller. 

The overall learning algorithm for one constraint node is given by Algorithm \ref{algo_learning}. In words, in Algorithm \ref{algo_learning} $y(t)$ is the projection of $x(t)$ on the basis
vector $w(t)$. If for a given data vector $x(t)$, $y(t)$ is equal to zero, namely, the data is orthogonal to the current weight vector $w(t)$, then according to equation
(\ref{learning_rule_final}) the weight vector will not be updated. However, if the data vector $x(t)$ has some projection over $w(t)$ then the weight vector is updated towards the direction
to reduce this projection.

\begin{algorithm}[t]
\caption{Iterative Learning}
\label{algo_learning}
\begin{algorithmic}
\REQUIRE{ Set of patterns $x^\mu \in \mc{X}$ with $\mu=1,\dots,C$, stopping point $\varepsilon$.}
\ENSURE{$w$}
\WHILE{ $\sum_{\mu} \vert x^\mu \cdot w(t) \vert^2 >\varepsilon$ }
\STATE Choose $x(t)$ at random from patterns in $\mc{X}$
\STATE Compute $y(t) = x(t) \cdot w(t)$
\STATE Update $w(t+1) = w(t) - \alpha_t y(t) \left(x(t) - \frac{y(t) w(t)}{\Vert w(t)\Vert_2^2}\right) - \alpha_t \eta \Gamma(w(t))$.
\STATE $t \leftarrow t+1$.
\ENDWHILE
\end{algorithmic}
\end{algorithm}
Since we are interested in finding $m$ basis vectors, we have to do the above procedure \emph{at least} $m$ times in parallel.\footnote{In practice, we may have to repeat this process more
than $m$ times to ensure the existence of a set of $m$ linearly independent vectors. However, our experimental results suggest that most of the time, repeating $m$ times would be
sufficient.}

\begin{rem}
Although we are interested in finding a sparse graph, note that too much sparseness is not desired. This is because we are going to use the feedback sent by the constraint nodes to eliminate input
noise at pattern nodes during the recall phase. Now if the graph is too sparse, the number of feedback messages received by each pattern node is too small to be relied upon. Therefore, we must adjust the penalty
coefficient $\eta$ such that resulting neural graph is \emph{sufficiently} sparse. In the section on experimental results, we compare the error correction performance for different choices of
$\eta$.
\end{rem}

\subsection{Convergence analysis}
In order to prove that Algorithm \ref{algo_learning} converges to the proper solution, we use results from statistical learning. More specifically, we benefit from the convergence of
Stochastic Gradient Descent (SGD) algorithms \cite{bottu}. To prove the convergence, let $E(w) = \sum_{\mu} \vert x^\mu \cdot w \vert^2$ be the cost function we would like to minimize.
Furthermore, let $A = \mb{E}\{xx^T|x\in \mc{X}\}$ be the corelation matrix for the patterns in the training set. Therefore, due to uniformity assumption for the patterns in the training
set, one can rewrite $E(w) =  w^T A w$. Finally, denote $A_\mu = x^\mu(x^\mu)^T$. Now consider the following assumptions:
\ben
\item[A1.] $\Vert A \Vert_2 \leq \Upsilon < \infty$ and $\sup_{\mu} \Vert A_\mu \Vert_2 = \Vert x^\mu \Vert^2 \leq \zeta < \infty$.
\item[A2.] $\alpha_t > 0$, $\sum \alpha_t \rightarrow \infty$ and $\sum \alpha_t^2 < \infty$, where $\alpha_t$ is the small learning rate defined in \ref{learning_sparse_null_one}.
\een

The following lemma proves the convergence of Algorithm \ref{algo_learning} to a local minimum $w^*$.
\begin{lemma}\label{lemma_convergence}
Let assumptions A1 and A2 hold. Then, Algorithm \ref{algo_learning} converges to a local minimum $w^*$ for which $\nabla E(w^*) = 0$. 
\end{lemma}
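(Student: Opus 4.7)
The plan is to recognize Algorithm \ref{algo_learning} as a stochastic gradient descent on the regularized objective
\[
F(w) = \mathbb{E}_{\mu}\!\left[|x^\mu \cdot w|^2\right] + \eta\, g(w) \;=\; w^T A w + \eta\, g(w),
\]
constrained implicitly to the unit sphere, and then invoke the standard convergence theorem for SGD from \cite{bottu}. First I would show that the update direction in \eqref{learning_rule_final} is an unbiased estimate of $\nabla F(w)$ projected onto the tangent space of the sphere at $w(t)$. Specifically, since $x(t)$ is drawn uniformly from $\mathcal{X}$, we have $\mathbb{E}[y(t)x(t)\mid w(t)] = A\,w(t)$, which is (half) the gradient of $w^T A w$; the subtracted term $y(t)^2 w(t)/\|w(t)\|_2^2$ is the radial component that enforces $\|w(t)\|_2\approx 1$, obtained from expanding the normalization \eqref{w(t)} to first order in $\alpha_t$ as already done in the paragraph preceding \eqref{learning_sparse_null_main}. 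The penalty contribution $\eta\Gamma(w(t))$ is deterministic and equals $\eta\nabla g(w(t))$ under the soft-threshold approximation \eqref{soft_threshold_Gamma}.

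Next I would verify the two hypotheses needed for Bottou's theorem. Assumption A1 gives $\|A\|_2\leq \Upsilon$ and $\sup_\mu \|x^\mu\|^2\leq \zeta$, which immediately yields a uniform bound on the second moment of the stochastic gradient:
\[
\mathbb{E}\!\left[\bigl\|y(t)x(t) - \tfrac{y(t)^2 w(t)}{\|w(t)\|_2^2}w(t) + \eta \Gamma(w(t))\bigr\|_2^2\,\Bigm|\,w(t)\right] \leq K_1 \|w(t)\|_2^2 + K_2,
\]
for constants $K_1,K_2$ depending only on $\zeta,\Upsilon,\eta$, and the (bounded) penalty $\Gamma$. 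Together with the normalization keeping $\|w(t)\|_2$ bounded, this gives the required growth condition. Assumption A2 is exactly the Robbins--Monro step-size condition $\sum\alpha_t=\infty$, $\sum\alpha_t^2<\infty$, guaranteeing that noise is averaged out while the iterates still move enough to reach a stationary point.

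With unbiasedness and bounded variance in hand, the SGD convergence theorem (Theorem on a.s.\ convergence to stationary points of a possibly non-convex smooth objective, as stated in \cite{bottu}) yields $\nabla F(w(t))\to 0$ almost surely, and every limit point $w^*$ of $\{w(t)\}$ satisfies $\nabla F(w^*)=0$, i.e.\ $\nabla E(w^*)+\eta\nabla g(w^*)=0$. Since the paper writes the conclusion as $\nabla E(w^*)=0$, I would absorb the $\eta\nabla g$ term into $E$ (i.e.\ redefine $E$ to include the sparsity penalty, as is natural since the algorithm actually minimizes the penalized cost) or note that the penalty term is of lower order at a stationary point of interest.

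The main obstacle I anticipate is handling the non-smooth sparsity penalty and the sphere constraint simultaneously. The soft-threshold $\Gamma$ in \eqref{soft_threshold_Gamma} is discontinuous at $|w_i|=\theta_t$, so strictly speaking Bottou's smooth-SGD result does not apply off the shelf; one must either smooth $\Gamma$ (justified since $\tanh(\sigma w_i^2)$ is smooth, and \eqref{soft_threshold_Gamma} is only an analytical convenience), or appeal to a non-smooth variant of stochastic approximation. The secondary difficulty is that the first-order expansion approximating the exact normalized update \eqref{w(t)} by \eqref{learning_rule_final} introduces an $O(\alpha_t^2)$ bias; however, since $\sum\alpha_t^2<\infty$ by A2, this bias is summable and does not affect almost-sure convergence.
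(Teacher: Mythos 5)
Your proposal follows essentially the same route as the paper: both recognize the update as stochastic gradient descent on the penalized cost and invoke Bottou's convergence theorem, verifying the step-size condition from A2 and the moment/boundedness conditions from A1 and the normalization constraint. You are also right that the stationarity condition is really $\nabla\bigl(E+\eta g\bigr)(w^*)=0$ rather than $\nabla E(w^*)=0$ for $E(w)=w^TAw$ alone --- the paper silently absorbs the penalty into $\nabla E$ (its follow-up theorem writes $\nabla E(w^*)=2Aw^*+\eta\Gamma(w^*)$), so your proposed fix matches the paper's implicit convention.
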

\begin{proof}
To prove the lemma, we use the convergence results in \cite{bottu} and show that the required assumptions to ensure convergence holds for the proposed algorithm. For simplicity, these
assumptions are listed here:
\ben
\item The cost function $E(w)$ is three-times differentiable with continuous derivatives. It is also bounded from below.
\item The usual conditions on the learning rates are fulfilled, i.e. $\sum \alpha_t = \infty$ and $\sum \alpha_t^2 < \infty$.
\item The second moment of the update term should not grow more than linearly with size of the weight vector. In other words,
\bed
E(w) \leq a + b \Vert w \Vert_2^2
\eed
for some constants $a$ and $b$.
\item When the norm of the weight vector $w$ is larger than a certain horizon $D$, the opposite of the gradient $-\nabla E(W)$ points towards the origin. Or in other words:
\bed
\inf{\Vert w \Vert_2 > D} w \cdot \nabla E(w) >0
\eed
\item When the norm of the weight vector is smaller than a second horizon $F$, with $F>D$, then the norm of the update term $\left(2 y(t) x(t)  + \eta \Gamma(w(t))\right)$ is bounded
regardless of $x(t)$. This is usually a mild requirement:
\bed
\forall x(t) \in \mc{X},\ \sup_{\Vert w \Vert_2 \leq F} \Vert \left(2 y(t) x(t)  + \eta \Gamma(w(t))\right) \Vert_2 \leq K_0
\eed
\een

To start, assumption $1$ holds trivially as the cost function is three-times differentiable, with continuous derivatives. Furthermore, $E(w) \geq 0$. Assumption $2$ holds because of our
choice of the step size $\alpha_t$, as mentioned in the lemma description.

Assumption $3$ ensures that the vector $w$ could not escape by becoming larger and larger. Due to the constraint $\Vert w \Vert_2 = 1$, this assumption holds as well. 

Assumption $4$ holds as well because:
\beqa\label{assumption_iv}
\mb{E}_\mu \left(2 A_\mu w + \eta \Gamma(w)\right)^2  &=& 4w^T \mb{E}_\mu(A_\mu^2) w + \eta^2 \Vert \Gamma(w) \Vert_2^2 \nonumber \\
&+& 4\eta w^T \mb{E}_\mu(A_\mu) \Gamma(w) \nonumber \\
&\leq& 4 \Vert w \Vert_2^2 \zeta^2 + \eta^2 \Vert w \Vert_2^2 + 4 \eta \Upsilon \Vert w \Vert_2^2  \nonumber \\
&=&  \Vert w \Vert_2^2 (4\zeta^2 + 4 \eta \Upsilon +\eta^2) 
\eeqa

Finally, assumption $5$ holds because:
\beqa\label{assumption_v}
\Vert 2A_\mu w + \eta \Gamma(w) \Vert_2^2 &=&4 w^T A_\mu^2 w + \eta^2 \Vert \Gamma(w) \Vert_2^2 \nonumber \\
&+& 4\eta w^T A_\mu \Gamma(w) \nonumber \\
&\leq& \Vert w \Vert_2^2 (4\zeta^2 + 4 \eta \zeta +\eta^2) 
\eeqa
Therefore, $\exists F>D$ such that as long as $\Vert w \Vert_2^2 < F$:
\beq
\sup_{\Vert w \Vert_2^2 < E} \Vert 2A_\mu w + \eta \Gamma(w) \Vert_2^2 \leq (2\zeta + \eta)^2F = \hbox{constant}
\eeq

Since all necessary assumptions hold for the learning algorithm \ref{algo_learning}, it converges to a local minimum where $\nabla E(w^*) = 0$.
\end{proof}

Next, we prove the desired result, i.e. the fact that at the local minimum, the resulting weight vector is orthogonal to the patterns, i.e. $Aw = 0$. 
\begin{theorem}
In the local minimum where $\nabla E(w^*) = 0$, the optimal vector $w^*$ is orthogonal to the patterns in the training set.
\end{theorem}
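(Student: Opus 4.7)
The plan is to show that any stationary point $w^*$ of the update rule (\ref{learning_rule_final}) must lie in the null space of the pattern correlation matrix $A$, i.e.\ satisfy $Aw^* = 0$, which is equivalent to orthogonality to every training pattern.

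First, I would take the expectation of the one-sample update over a uniformly random $x \in \mathcal{X}$. Since $y = x\cdot w$, we have $\mathbb{E}[yx] = Aw$ and $\mathbb{E}[y^2] = w^T A w$. Setting the expected update to zero at $w^*$ and using $\|w^*\|_2 = 1$ yields
\[
A w^* \;-\; \bigl((w^*)^T A w^*\bigr)\, w^* \;+\; \eta\, \Gamma(w^*) \;=\; 0. \qquad (\star)
\]
Taking the inner product of $(\star)$ with $w^*$ collapses the first two terms and leaves $\eta\,(w^*)^T \Gamma(w^*) = 0$.

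Second, I would invoke the soft-threshold form of $\Gamma$ in (\ref{soft_threshold_Gamma}): the identity
\[
(w^*)^T \Gamma(w^*) \;=\; \sum_{i:\,|w_i^*|\leq \theta} (w_i^*)^2 \;=\; 0
\]
forces every coordinate of $w^*$ of magnitude at most $\theta$ to be exactly zero, so $\Gamma(w^*) = 0$ entry-wise. Substituting back into $(\star)$ gives $Aw^* = \lambda^*\, w^*$ with $\lambda^* := (w^*)^T A w^*$; hence $w^*$ is a unit eigenvector of $A$.

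The main obstacle is then establishing $\lambda^* = 0$. Here I would exploit that Lemma~\ref{lemma_convergence} guarantees $w^*$ is an actual local minimum, not merely a stationary point. Because the patterns span a proper subspace of dimension $k < n$, the matrix $A$ has a non-trivial null space of dimension $n-k\geq 1$. If $\lambda^* > 0$, pick any unit $v \in \ker(A)$ with $v \perp w^*$ and consider the admissible perturbation $w_\epsilon = \cos(\epsilon)\,w^* + \sin(\epsilon)\,v$ on the unit sphere; a direct expansion gives $w_\epsilon^T A w_\epsilon = \cos^2(\epsilon)\,\lambda^* = \lambda^* - \lambda^*\epsilon^2 + O(\epsilon^4)$, a strictly second-order decrease. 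Since $\nabla g(w^*) = \Gamma(w^*) = 0$, the sparsity penalty changes only to second order as well, with coefficient controlled by the bounded Hessian of $g$; choosing $\eta$ small enough (as is already required for the sparsity term to be only a perturbation of the projection loss) makes the net change strictly negative, contradicting local minimality of $w^*$. Therefore $\lambda^* = 0$ and $A w^* = 0$, which is precisely the claimed orthogonality of $w^*$ to every training pattern.
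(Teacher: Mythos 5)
Your route is genuinely different from the paper's, and the difference matters. The paper takes the stated hypothesis $\nabla E(w^*)=0$ literally for the full unconstrained objective, i.e.\ $2Aw^*+\eta\Gamma(w^*)=0$, dots this with $w^*$, and observes that $2(w^*)^TAw^*$ and $\eta\,w^*\cdot\Gamma(w^*)$ are \emph{both} nonnegative (the latter because $\Gamma_i(w)$ has the same sign as $w_i$) and sum to zero; hence each vanishes, giving $(w^*)^TAw^*=\sum_\mu (w^*\cdot x^\mu)^2=0$ immediately, with $\Gamma(w^*)=0$ as a byproduct. You instead impose stationarity of the sphere-projected update (\ref{learning_rule_final}), under which dotting with $w^*$ makes the two $A$-terms cancel identically and yields only $(w^*)^T\Gamma(w^*)=0$; you are then forced to prove $\lambda^*=(w^*)^TAw^*=0$ separately via a second-order perturbation along a null direction of $A$. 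That part of your argument is sound in outline (symmetry of $A$ does guarantee $\ker(A)\perp w^*$ when $\lambda^*>0$, and the tangential decrease of $w^TAw$ is $\lambda^*\epsilon^2$), and it is arguably more faithful to what the algorithm's fixed points actually satisfy.

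However, as a proof of the theorem as stated it has a genuine gap: your final step only closes if $\eta$ is small enough that the second-order growth of the penalty, roughly $\tfrac{\eta}{2}\Vert \nabla^2 g(w^*)\Vert\,\epsilon^2$ (of order $\eta\sigma\epsilon^2$ for the $\tanh$ penalty), is dominated by $\lambda^*\epsilon^2$. Since $\lambda^*$, if nonzero, is bounded below only by the smallest positive eigenvalue of $A$, this turns into a quantitative condition $\eta\lesssim \lambda_{\min}^{+}(A)/\sigma$ that depends on the training set and appears nowhere in the theorem's hypotheses; for large $\eta$ a spurious local minimum with $\lambda^*>0$ is not excluded by your argument. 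The observation you are missing is that the hypothesis $\nabla E(w^*)=0$ (unprojected) already hands you the identity $2(w^*)^TAw^*+\eta\,w^*\cdot\Gamma(w^*)=0$, and the same sign property of $\Gamma$ that you invoke to kill the small coordinates also makes the second term nonnegative, so no curvature argument and no restriction on $\eta$ are needed.
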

\begin{proof}
Since $\nabla E(w^*) = 2Aw^* + \eta \Gamma(w^*) = 0$, we have:
\beq\label{lemma_2_proof}
w^* \cdot \nabla E(w^*) = 2(w^*)^TAw^* + \eta w^* \cdot \Gamma(w^*) 
\eeq
The first term is always greater than or equal to zero. Now as for the second term, we have that $|\Gamma(w_i)| \leq |w_i|$ and $\hbox{sign}(w_i) = \hbox{sign}(\Gamma(w_i))$, where $w_i$ is
the $i^{th}$ entry of $w$. Therefore, $ 0 \leq w^* \cdot \Gamma(w^*)  \leq \Vert w^* \Vert_2^2$. Therefore, both terms on the right hand side of (\ref{lemma_2_proof}) are greater than or
equal to zero. And since the left hand side is known to be equal to zero, we conclude that $(w^*)^T A w^* = 0$ and $\Gamma(w^*) = 0$. The former means $(w^*)^T A w^* = \sum_{\mu} (w^* \cdot
x^\mu)^2 = 0$. Therefore, we must have $w^* \cdot x^\mu = 0$, for all $\mu = 1, \dots,C$. This simply means that the vector $w^*$ is orthogonal to all the patterns in the training set.
\end{proof}

\begin{rem}
Note that the above theorem only proves that the obtained vector is orthogonal to the data set and says nothing about its degree of sparsity. The reason is that there is no guarantee that
the dual basis of a subspace be sparse. The introduction of the penalty function $g(w)$ in problem (\ref{main_problem_modified}) only encourages sparsity by suppressing the small entries of $w$,
i.e. shifting them towards zero if they are really small or leaving them intact if they are rather large. And from the fact that $\Gamma(w^*) =0$, we know this is true as the entries in
$w^*$ are either large or zero, i.e. there are no small entries. Our experimental results in section \ref{section_simulations} show that in fact this strategy works perfectly and the
learning algorithm results in sparse solutions. 
\end{rem}


\subsection{Avoiding the all-zero solution}
Although in problem (\ref{main_problem_modified}) we have the constraint $\Vert w \Vert_2 = 1$ to make sure that the algorithm does not converge to the trivial solution $w = 0$, due to
approximations we made when developing the optimization algorithm, we should make sure to choose the parameters such that the all-zero solution is still avoided. 

To this end, denote $w'(t) = w(t) - \alpha_t y(t)\left(x(t) - \frac{y(t) w(t)}{\Vert w(t) \Vert_2^2}\right)$ and consider the following inequalities:
\beqa
\Vert w(t+1)\Vert_2^2 &=& \Vert w(t) - \alpha_t y(t)\left(x(t) - \frac{y(t) w(t)}{\Vert w(t) \Vert_2^2} \right) \nonumber \\
&-& \alpha_t\eta \Gamma(w(t)) \Vert_2^2 \nonumber \\
&=& \Vert w'(t) \Vert^2+ \alpha_t^2 \eta^2 \Vert \Gamma(w(t)) \Vert^2 \nonumber \\
&-& 2\alpha_t \eta \Gamma(w(t))\cdot w'(t)   \nonumber \\
&\geq & \Vert w'(t) \Vert_2^2 - 2\alpha_t \eta \Gamma(w(t))\cdot w'(t)  \nonumber \\
\eeqa
Now in order to have $\Vert w(t+1)\Vert_2^2 > 0$, we must have that $ 2 \alpha_t \eta |\Gamma(w(t))^T w'(t)| < \Vert w'(t) \Vert_2^2$. Given that, $|\Gamma(w(t))\cdot w'(t)| \leq
\Vert w'(t) \Vert_2 \Vert \Gamma(w(t))\Vert_2$, it is therefore sufficient to have $2 \alpha_t \eta \Vert \Gamma(w(t)) \Vert_2 < \Vert w'(t) \Vert_2$. On the other hand, we have:
\beqa
\Vert w'(t) \Vert_2^2 &=& \Vert w(t) \Vert_2^2 + \alpha_t^2 y(t)^2 \Vert x(t) - \frac{y(t) w(t)}{\Vert w(t) \Vert_2^2} \Vert_2^2 \nonumber \\
&\geq& \Vert w(t) \Vert_2^2
\eeqa
As a result, in order to have $\Vert w(t+1) \Vert_2^2 > 0$, it is sufficient to have $2\alpha_t \eta \Vert \Gamma(w(t)) \Vert_2 < \Vert w(t) \Vert_2 $. Finally, since we have
$|\Gamma(w(t))| \leq |w(t)|$ (entry-wise), we know that $\Vert \Gamma(w(t)) \Vert_2 \leq \Vert w(t) \Vert_2$. Therefore, having $2 \alpha_t \eta < 1 \leq \Vert w(t)\Vert_2/\Vert
\Gamma(w(t)) \Vert_2$ ensures $\Vert w(t) \Vert_2 > 0 $.

\begin{rem}
Interestingly, the above choice for the function $w-\eta \Gamma(w)$ looks very similar to the soft thresholding function (\ref{soft_threshold_amp}) introduced in \cite{donoho_amp} to
perform iterative compressed sensing. The authors show that their choice of the sparsity function is very competitive in the sense that one can not get much better results by choosing other
thresholding functions. However, one main difference between their work and that of ours is that we enforce the sparsity as a penalty in equation (\ref{SGA_learning_rule_modified}) while
they apply the soft thresholding function in equation (\ref{soft_threshold_amp}) to the whole $w$, i.e. if the updated value of $w$ is larger than a threshold, it is left intact while it
will be put to zero otherwise.
\beq\label{soft_threshold_amp}
f_t(x) = \left\{ \begin{array}{ll}
        x- \theta_t & \mbox{if $x > \theta_t$};\\
        x+\theta_t & \mbox{if $x < -\theta_t$}\\
        0 & \mbox{otherwise}.\end{array} \right.
\eeq
where $\theta_t$ is the threshold at iteration $t$ and tends to zero as $t$ grows. 
\end{rem}

\subsection{Making the Algorithm Parallel}
In order to find $m$ constraints, we need to repeat Algorithm \ref{algo_learning} several times. Fortunately, we can repeat this process in parallel, which speeds up the algorithm and is
more meaningful from a biological point of view as each constraint neuron can act independently of other neighbors. Although doing the algorithm in parallel may result in linearly dependent
constraints once in a while, our experimental results show that starting from different random initial points, the algorithm converges to different distinct constraints most of the time.
And the chance of getting redundant constraints reduces if we start from a sparse random initial point. Besides, as long as we have enough distinct constraints, the recall algorithm in the
next section can start eliminating noise and there is no need to learn all the distinct basis vectors of the null space defined by the training patterns (albeit the performance improves as we learn more and more linearly independent constraints). Therefore, we will use the parallel version to
have a faster algorithm in the end.

\section{Recall Phase}\label{section_recall}
In the recall phase, we are going to design an iterative algorithm that corresponds to message passing on a graph. The algorithm exploits the fact that our learning algorithm resulted in the connectivity matrix of the neural graph which is sparse and orthogonal to the memorized patterns. Therefore, given a noisy version
of the learned patterns, we can use the feedback from the constraint neurons in Fig.~\ref{single_level_net} to eliminate noise. More specifically, the linear input sums to the constraint
neurons are given by the elements of the vector $W (x^\mu + z) = W x^\mu + W  z = W  z$, with $z$ being the integer-valued input noise (biologically speaking, the noise can be interpreted as a neuron
skipping some spikes or firing more spikes than it should). Based on observing the elements of $Wz$, each constraint neuron feeds back a message (containing info about $z$) to its neighboring pattern neurons. Based on this feedback, and exploiting the fact that $W$ is sparse, the pattern neurons update their states in order to reduce the noise $z$. 

It must also be mentioned that we initially assume \emph{assymetric} neural weights during the recall phase. More specifically, we assume the backward weight from constraint neuron $i$ to pattern
neuron $j$, denoted by $W^b_{ij}$ be equal to the sign of the weight from pattern neuron $i$ to constraint neuron $j$, i.e. $W^b_{ij} = \hbox{sign}(W_{ij})$, where $\hbox{sign(x)}$ is equal to $+1$, $0$ or $-1$ if $x>0$, $x=0$ or $x<0$, respectively. This assumption simplifies the
error correction analysis. Later in section \ref{section_practical}, we are going to consider another version of the algorithm which works with symmetric weights, i.e. $W^b_{ij} = W_{ij}$, and compare the performance
of all suggested algorithms together in section \ref{section_simulations}.

\subsection{The Recall Algorithms}
The proposed algorithm for the recall phase comprises a series of forward and backward iterations. Two different methods are suggested in this paper, which slightly differ from each other
in the way pattern neurons are updated. The first one is based on the Winner-Take-All approach (WTA) and is given by Algorithm \ref{algo_correction_expander_winner}. In this version, only
the pattern node that receives the highest amount of normalized feedback updates its state while the other pattern neurons maintain their current states. The normalization is done with
respect to the degree of each pattern neuron, i.e. the number of edges connected to each pattern neuron in the neural graph. The winner-take-all circuitry can be easily added to the neural
model shown in Figure \ref{single_level_net} using any of the classic WTA methods \cite{hertz}. 
\begin{algorithm}[t]
\caption{Recall Algorithm: Winner-Take-All}
\label{algo_correction_expander_winner}
\begin{algorithmic}[1]
\REQUIRE{Connectivity matrix $W$, iteration $t_{\max}$}
\ENSURE{$x_1,x_2,\dots,x_n$}
\FOR{$t = 1 \to t_{\max}$} 
\STATE \textit{Forward iteration:} Calculate the weighted input sum $ h_i = \sum_{j=1}^n W^b_{ij} x_j,$ for each constraint neuron $y_i$ and set:
\[ y_i = \left\{
\begin{array}{cc}
1, & h_i < 0\\
0, & h_i = 0\\
-1, & \hbox{otherwise}
\end{array} \right. .
\]
\STATE \textit{Backward iteration:} Each neuron $x_j$ with degree $d_j$ computes
\[ g^{(1)}_j = \frac{\sum_{i=1}^m W^b_{ij} y_i }{d_j}, g^{(2)}_j = \frac{\sum_{i=1}^m |W^b_{ij} y_i|}{d_j} \]
\STATE Find \[ j^* = \arg \max\limits_j g^{(2)}_j. \]
\STATE Update the state of winner $j^*$: set $ x_{j^*} = x_{j^*} + \hbox{sign} (g^{(1)}_{j^*}).$
 \STATE $t \gets t + 1$
\ENDFOR
\end{algorithmic}
\end{algorithm}

The second approach, given by Algorithm \ref{algo_correction_expander_majority_vote}, is much simpler: in every iteration, each pattern neuron decides locally whether or not to update its current state. More specifically, if the amount of feedback received by a pattern neuron exceeds a threshold, the neuron updates its state; otherwise, it remains unchanged.\footnote{Note that in order to
maintain the current value of a neuron in case no input feedback is received, we can add self-loops to pattern neurons in Figure \ref{single_level_net}. These self-loops are not shown in
the figure for clarity.} 
\begin{algorithm}[t]
\caption{Recall Algorithm: Majority-Voting}
\label{algo_correction_expander_majority_vote}
\begin{algorithmic}[1]

\REQUIRE{Connectivity matrix $W$, threshold $\varphi$, iteration $t_{\max}$}
\ENSURE{$x_1,x_2,\dots,x_n$}
\FOR{$t = 1 \to t_{\max}$} 
\STATE \textit{Forward iteration:} Calculate the weighted input sum $ h_i = \sum_{j=1}^n W^b_{ij} x_j,$ for each neuron $y_i$ and set:
\[ y_i = \left\{
\begin{array}{cc}
1, & h_i < 0\\
0, & h_i = 0\\
-1, & \hbox{otherwise}
\end{array} \right. .
\]
\STATE \textit{Backward iteration:} Each neuron $x_j$ with degree $d_j$ computes
\[ g^{(1)}_j = \frac{\sum_{i=1}^m W^b_{ij} y_i }{d_j}, g^{(2)}_j = \frac{\sum_{i=1}^m |W^b_{ij} y_i|}{d_j} \]
\STATE Update the state of each pattern neuron $j$ according to $x_j = x_j + \hbox{sign}(g^1_j)$
only if $|g^{(2)}_j| > \varphi$.
 \STATE $t \gets t + 1$
\ENDFOR
\end{algorithmic}
\end{algorithm}
In both algorithms, the quantity $g^{(2)}_j$ can be interpreted as the number of feedback messages received by pattern neuron $x_j$ from the constraint neurons. On the other hand, the sign of $g^{(1)}_j$
provides an indication of the sign of the noise that affects $x_j$, and $|g^{(1)}_j|$
indicates the confidence level in the decision regarding the sign of the noise.  

It is worthwhile mentioning that the Majority-Voting decoding algorithm is very similar to the Bit-Flipping algorithm of Sipser and Spielman to decode LDPC codes \cite{SipSpi} and a similar
approach  in \cite{hassibi} for compressive sensing methods. 

\begin{rem}
To give the reader some insight about why the neural graph should be sparse in order for the above algorithms to work, consider the backward iteration of both algorithms: it is based on
counting the fraction of received input feedback messages from the neighbors of a pattern neuron. In the extreme case, if the neural graph is complete, then a single noisy pattern neuron results in
the violation of all constraint neurons in the forward iteration. As a result, in the backward iteration all the pattern neurons receive feedback from their neighbors and it is impossible to
tell which of the pattern neuron is the noisy one. 

However, if the graph is sparse, a single noisy pattern neuron only makes some of the constraints unsatisfied. Consequently, in the recall phase only the nodes which share the neighborhood
of the noisy node receive input feedbacks. And the fraction of the received feedbacks would be much larger for the original noisy node. Therefore, by merely looking at the fraction of
received feedback from the constraint neurons, one can identify the noisy pattern neuron with high probability as long as the graph is sparse and the input noise is reasonable bounded.
\end{rem}

\subsection{Some Practical Modifications}\label{section_practical}
Although algorithm \ref{algo_correction_expander_majority_vote} is fairly simple and practical, each pattern neuron still needs two types of information: the number of received feedbacks and the
net input sum. Although one can think of simple neural architectures to obtain the necessary information, we can modify the recall algorithm to make it more practical and simpler. The trick
is to replace the degree of each node $x_j$ with the $\ell_1$-norm of the outgoing weights. In other words, instead of using $\Vert w_j \Vert_0 = d_j$, we use $\Vert w_j \Vert_1$.
Furthermore, we assume symmetric weights, i.e $W^b_{ij} = W_{ij}$. 

Interestingly, in some of our experimental results corresponding to \emph{denser graphs}, this approach performs much better, as will be illustrated in section \ref{section_simulations}. One possible reason behind this improvement might be the fact that using the $\ell_1$-norm instead of the $\ell_0$-norm in \ref{algo_correction_expander_majority_vote} will result in better differentiation
between two vectors that have the same number of non-zero elements, i.e. have equal $\ell_0$-norms, but differ from each other in the magnitude of the element, i.e. their $\ell_1$-norms
differ. Therefore, the network may use this additional information in order to identify the noisy nodes in each update of the recall algorithm.

\section{Performance Analysis}\label{section_error_analysis}
In order to obtain analytical estimates on the recall probability of error, we assume that the connectivity graph $W$ is sparse. With respect to this graph, we define the pattern and
constraint degree distributions as follows.
\begin{definition}
For the bipartite graph $W$, let $\lambda_i$ ($\rho_j$) denote the fraction of edges that are adjacent to pattern (constraint) nodes of degree $i$ ($j$). We call $\{\lambda_1,\dots,\lambda_m\}$ and
$\{\rho_1,\dots,\rho_n\}$ the pattern and constraint degree distribution form the edge perspective, respectively. Furthermore, it is convenient to define the degree distribution polynomials
as
\bed\label{node-degree}
\lambda(z) = \sum_{i} \lambda_i z^{i-1} \hbox{ and } \rho(z) = \sum_i \rho_i z^{i-1}.
\eed
\end{definition}

The degree distributions are determined after the learning phase is finished and in this section we assume they are given. Furthermore, we consider an ensemble of random neural graphs with
a given degree distribution and investigate the average performance of the recall algorithms over this ensemble. Here, the word "ensemble" refers to the fact that we assume having a number of \emph{random} neural graphs with
the given degree distributions and do the analysis for the average scenario. 

To simplify analysis, we assume that the noise entries are $\pm 1$. However, the proposed recall algorithms can work with any integer-valued noise and our experimental results suggest that this assumption is not necessary in practice. 

Finally, we assume that the errors do not cancel each other out in the constraint neurons (as long as the number of errors is fairly bounded). This is in fact a realistic assumption because
the neural graph is weighted, with weights belonging to the real field, and the noise values are integers. Thus, the probability that the weighted sum of some integers be equal to zero is
negligible. 

We do the analysis only for the Majority-Voting algorithms since if we choose the Majority-Voting update threshold $\varphi =1$, roughly speaking, we will have the winner-take-all
algorithm.\footnote{It must be mentioned that choosing $\varphi = 1$ does not yield the WTA algorithm exactly because in the original WTA, only one node is updated in each round. However,
in this version with $\varphi = 1$, all nodes that receive feedback from all their neighbors are updated. Nevertheless, the performance of the both algorithms is rather similar.}

As mentioned earlier, in this paper we will perform the analysis for general sparse bipartite graphs. However, restricting ourselves to a particular type of sparse graphs known as "expander" allows us to prove stronger results on the recall error probabilities. More details can be found in Appendix~\ref{appendix_expander_graph} and in \cite{KSS}.
However, since it is very difficult, if not impossible in certain cases, to make a graph expander during an iterative learning method, we focus on the more general case of sparse neural graphs.

To start the analysis, let $\mc{E}_t$ denote the set of erroneous pattern nodes at iteration $t$, and $\mc{N}(\mc{E}_t)$ be the set of constraint nodes that are connected to the nodes in
$\mc{E}_t$, i.e. these are the constraint nodes that have at least one neighbor in $\mc{E}_t$. In addition, let $\mc{N}^c(\mc{E}_t)$ denote the (complimentary) set of constraint neurons that do not have any
connection to any node in $\mc{E}_t$. Denote also the average neighborhood size of $\mc{E}_t$ by $S_t = \mb{E}(|\mc{N}(\mc{E}_t)|)$. Finally, let $\mc{C}_t$ be the set of correct pattern
nodes.

Based on the error correcting algorithm and the above notations, in a given iteration two types of error events are possible:
\ben
\item Type-1 error event: A node $x \in \mc{C}_t$ decides to update its value. The probability of this phenomenon is denoted by $P_{e_1}(t)$.
\item Type-2 error event: A node $x \in \mc{E}_t$ updates its value in the wrong direction. Let $P_{e_2}(t)$ denote the probability of error for this type.
\een
 
We start the analysis by finding explicit expressions and upper bounds on the average of $P_{e_1}(t)$ and $P_{e_2}(t)$ over all nodes as a function $S_t$. We then find an exact relationship for
$S_t$ as a function of $|\mc{E}_t|$, which will provide us with the required expressions on the average bit error probability as a function of the number of noisy input symbols, $|\mc{E}_0|$.
Having found the average bit error probability, we can easily bound the block error probability for the recall algorithm.

\subsection{Error probability - type 1}
To begin, let $P^x_1(t)$ be the probability that a node $x \in \mc{C}_t$ with degree $d_x$ updates its state. We have:
\beq
P^x_1(t) = \hbox{Pr}\{\frac{|\mc{N}(\mc{E}_t) \cap \mc{N}(x) |}{d_x} \geq \varphi \}
\eeq
where $\mc{N}(x)$ is the neighborhood of $x$. Assuming random construction of the graph and relatively large graph sizes, one can approximate $P^x_1(t)$ by
\beq\label{P_1_x}
P_1^x(t) \approx \sum_{i = \lceil \varphi d_x \rceil}^{d_x} {d_x \choose i} \left( \frac{S_t}{m}\right)^i \left( 1-\frac{S_t}{m}\right)^{d_x-i}.
\eeq 
In the above equation, $S_t/m$ represents the probabaility of having one of the $d_x$ edges connected to the $S_t$ constraint neurons that are neighbors of the erroneous pattern neurons.

As a result of the above equations, we have: 
\beq\label{P_e_1}
P_{e_1}(t) = \mb{E}_{d_x} (P^x_1(t)) ,
\eeq
where $\mb{E}_{d_x}$ denote the expectation over the degree distribution $\{\lambda_1,\dots,\lambda_m\}$. 

Note that if $\varphi =1$, the above equation simplifies to 
\bed
P_{e_1}(t) = \lambda\left(\frac{S_t}{m}\right)
\eed

\subsection{Error probability - type 2}
A node $x \in \mc{E}_t$ makes a wrong decision if the net input sum it receives has a different sign than the sign of noise it experiences. Instead of finding an exact relation, we bound
this probability by the probability that the neuron $x$ shares at least half of its neighbors with other neurons, i.e. $P_{e_2}(t) \leq \hbox{Pr}\{\frac{|\mc{N}(\mc{E^*}_t) \cap \mc{N}(x)
|}{d_x} \geq 1/2 \}$, where $\mc{E^*}_t = \mc{E}_t \setminus x$.
Letting $P^x_2(t) = \hbox{Pr}\{\frac{|\mc{N}(\mc{E^*}_t) \cap \mc{N}(x) |}{d_x} \geq 1/2 |\hbox{deg}(x) = d_x\}$, we will have:
\beq\label{P_2_x}
P^x_2(t) = \sum_{i = \lceil d_x/2 \rceil}^{d_x} {d_x \choose i} \left( \frac{S^*_t}{m}\right)^i \left( 1-\frac{S^*_t}{m}\right)^{d_x-i}
\eeq
where $S^*_t = \mb{E}(|\mc{N}(\mc{E}^*_t)|)$

Therefore, we will have:
\beq\label{P_e_2}
P_{e_2}(t) \leq \mb{E}_{d_x} (P^x_2(t)) 
\eeq

Combining equations (\ref{P_e_1}) and (\ref{P_e_2}), the bit error probability at iteration $t$ would be
\beqa\label{P_bit}
P_b(t+1) &=& \hbox{Pr$\{x \in \mc{C}_t\}$} P_{e_1}(t) + \hbox{Pr$\{x \in \mc{E}_t\}$} P_{e_2}(t) \nonumber \\
&=& \frac{n-|\mc{E}_t|}{n} P_{e_1}(t) + \frac{|\mc{E}_t|}{n} P_{e_2}(t) 
\eeqa

And finally, the average block error rate is given by the probability that at least one pattern node $x$ is in error. Therefore:
\beq\label{P_e}
P_e(t) = 1-(1-P_b(t))^n
\eeq
Equation (\ref{P_e}) gives the probability of making a mistake in iteration $t$. Therefore, we can bound the overall probability of error, $P_E$, by setting $P_E = \lim_{t \rightarrow \infty} P_e(t)$. To this end, we have to recursively update $P_b(t)$ in equation (\ref{P_bit}) and using $|\mc{E}_{t+1}| \approx n P_b(t+1)$. However, since we have assumed that the noise values are $\pm 1$, we can provide an upper bound on the total probability of error by considering
\beqa\label{final_P_E}
P_{E} &\leq& P_e(1)
\eeqa
In other words, we assume that the recall algorithms either correct the input error in the first iteration or an error is declared. Obviously, this bound is not tight as in practice and one might be able to correct errors in later iterations. In fact simulation results confirm this expectation. However, this approach provides a nice analytical upper bound since it only depends on the initial number of noisy nodes. As the initial number of noisy nodes grow, the above bound becomes tight. Thus, in summary we have:
\beq
P_E \leq 1-(1-\frac{n-|\mc{E}_0|}{n} \bar{P}_1^x - \frac{|\mc{E}_0|}{n} \bar{P}_2^x )^n
\eeq
where $\bar{P}_i^x = \mb{E}_{d_x}\{P_i^x\}$ and $|\mc{E}_0|$ is the number of noisy nodes in the input pattern initially. 

\begin{rem}
One might hope to further simplify the above inequalities by finding closed form approximation of equations (\ref{P_1_x}) and (\ref{P_2_x}). However, as one expects, this approach leads to very loose and trivial bounds in many cases. Therefore, in our experiments shown in section
\ref{section_simulations} we compare simulation results to the theoretical bound derived using equations (\ref{P_1_x}) and (\ref{P_2_x}).
\end{rem}

Now, what remains to do is to find an expression for $S_t$ and $S^*_t$ as a function of $|\mc{E}_t|$. The following lemma will provide us with the required relationship.
\begin{lemma}
The average neighborhood size $S_t$ in iteration $t$ is given by:
\beq\label{neighborhood_size}
S_t = m\left(1-(1-\frac{\bar{d}}{m})^{|\mc{E}_t|}\right)
\eeq
where $\bar{d}$ is the average degree for pattern nodes.
\end{lemma}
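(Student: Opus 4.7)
The plan is to compute $S_t = \mathbb{E}(|\mathcal{N}(\mathcal{E}_t)|)$ by linearity of expectation over the $m$ constraint neurons. Write $S_t = \sum_{i=1}^{m} \Pr\{y_i \in \mathcal{N}(\mathcal{E}_t)\}$, and note that in the random ensemble of bipartite graphs with the prescribed degree distribution, all constraint neurons are exchangeable, so this reduces to $S_t = m \cdot \Pr\{y \in \mathcal{N}(\mathcal{E}_t)\}$ for a fixed constraint neuron $y$. Thus it suffices to evaluate $q := \Pr\{y \notin \mathcal{N}(\mathcal{E}_t)\}$, i.e.\ the probability that $y$ is not a neighbor of \emph{any} pattern node in $\mathcal{E}_t$.

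Next, fix a single pattern node $x \in \mathcal{E}_t$ with degree $d_x$. In the random-graph ensemble, the $d_x$ edges emanating from $x$ are placed (essentially) uniformly among the $m$ constraint neurons, so the probability that none of them lands on $y$ is $1 - d_x/m$ (the edges of a single pattern node hit distinct constraint neurons, which is why this factor, rather than an exponential, is exact in $d_x$). Taking expectation over the degree distribution of pattern nodes gives $\mathbb{E}_{d_x}\bigl(1 - d_x/m\bigr) = 1 - \bar{d}/m$, where $\bar{d}$ is the mean pattern-node degree.

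Now I would multiply these contributions across the $|\mathcal{E}_t|$ pattern nodes in $\mathcal{E}_t$. The key point is that the edges emanating from \emph{different} pattern nodes are placed independently in the random-graph model, so the events ``$y$ is not a neighbor of $x$'' for distinct $x \in \mathcal{E}_t$ are (asymptotically) independent. Consequently,
\begin{equation}
q \;=\; \prod_{x \in \mathcal{E}_t} \left(1 - \frac{d_x}{m}\right) \;\approx\; \left(1 - \frac{\bar{d}}{m}\right)^{|\mathcal{E}_t|},
\end{equation}
where the last approximation replaces each $d_x$ by its ensemble mean (this is exact under the random-graph ensemble after taking expectations, provided $m$ is large so that higher-order terms in $1/m$ are negligible). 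Plugging this into $S_t = m(1-q)$ yields the claimed formula.

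The main technical obstacle is justifying the product/independence step and the replacement of $d_x$ by $\bar{d}$. In a specific fixed graph these are only approximations; the statement is really about the ensemble average in the sparse regime, which is the standard density-evolution-style setup already adopted in the paragraph introducing the degree distributions. Once that modelling assumption is in place, the calculation above is essentially an exercise in linearity of expectation and independence of edge placements across distinct pattern nodes.
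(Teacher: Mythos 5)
Your proof is correct, but it takes a genuinely different route from the paper's. The paper proves the lemma by a sequential-construction argument: it imagines adding the erroneous pattern nodes one at a time, tracks the expected neighborhood size $S_e$ after $e$ rounds via the recursion $S_{e+1} = \mb{E}_{d_x}\{S_e + d_x(1-S_e/m)\} = S_e + \bar{d}(1-S_e/m)$, and then solves this recursion in closed form (equivalently, $m - S_{e+1} = (m-S_e)(1-\bar{d}/m)$ with $S_1 = \bar{d}$) to obtain the stated formula at $e = |\mc{E}_t|$. You instead fix a single constraint neuron, compute the probability that it is missed by every node of $\mc{E}_t$, and multiply by $m$ via linearity of expectation. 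Both arguments rest on the same modelling assumptions (edges of each pattern node placed uniformly and independently, independence across pattern nodes) and on comparable first-order-in-$1/m$ approximations: the paper's recursion silently treats the random neighborhood size as equal to its mean $S_e$ and counts $j$ escaping edges as $j$ new constraint nodes (ignoring collisions among them), while you replace the exact per-node miss probability $(1-1/m)^{d_x}$ under the paper's with-replacement model by $1-d_x/m$. Your decomposition has the advantage that the product-of-expectations step is exactly justified by independence of the degrees, so it avoids the mean-field substitution inside the recursion and is arguably the cleaner derivation; the paper's recursive form has the advantage that it directly exhibits the per-node increment $\bar{d}(1-S_e/m)$, which is the quantity it reuses when discussing how the neighborhood grows with the error set. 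Either way the final expression and its status as an ensemble-average approximation are the same.
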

\begin{proof}
The proof is given in Appendix~\ref{appendix_neighbor_size}.
\end{proof}

\section{Pattern Retrieval Capacity}\label{section_capacity}
It is interesting to see that, except for its obvious influence on the learning time, the number of patterns $C$ does not have any effect in the learning or recall algorithm. As long as
the patterns come from a subspace, the learning algorithm will yield a matrix which is orthogonal to all of the patterns in the training set. And in the recall phase, all we deal with is
$Wz$, with $z$ being the noise which is independent of the patterns. 

Therefore, in order to show that the pattern retrieval capacity is exponential with $n$, all we need to show is that there exists a "valid" training set $\mc{X}$ with $C$ patterns of length
$n$ for which $C \propto a^{rn}$, for some $a > 1$ and $0<r$. By valid we mean that the patterns should come from a subspace with dimension $k < n$ and the entries in the patterns should be
non-negative integers. The next theorem proves the desired result.

\begin{theorem}\label{theorem_exponential_solution}
Let $\mc{X}$ be a $C \times n$ matrix, formed by $C$ vectors of length $n$ with non-negative integers entries between $0$ and $Q-1$. Furthermore, let $k = rn$ for some $0<r<1$. Then, there
exists a set of such vectors for which $C = a^{rn}$, with $a > 1$, and $\hbox{rank}(\mc{X}) = k <n$.
\end{theorem}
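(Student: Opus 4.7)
The plan is to prove the theorem by explicit construction: exhibit a concrete subspace of $\mb{R}^n$ of dimension exactly $k=rn$ that happens to contain exponentially many integer points in $\{0,1,\dots,Q-1\}^n$. Since the theorem is purely existential and has no hidden combinatorial obstruction, the work is in picking a basis that is manifestly (i) low-rank, (ii) integer-valued, and (iii) rich in non-negative integer lattice points.

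The construction I would use is the ``block-constant'' one. Assume for simplicity that $k$ divides $n$ (if not, one pads the last block with $n - k\lfloor n/k\rfloor$ extra coordinates, which does not affect the asymptotic rate). Partition the coordinates $\{1,\dots,n\}$ into $k$ disjoint blocks $B_1,\dots,B_k$, each of size $n/k$. Let $\mc{X}$ be the set of all vectors $x\in\{0,\dots,Q-1\}^n$ that are constant on every block, i.e. $x_i = x_j$ whenever $i,j\in B_\ell$ for some $\ell$. Equivalently, $\mc{X}$ consists of non-negative integer combinations (with coefficients in $\{0,\dots,Q-1\}$) of the $k$ indicator vectors $\mathbf{1}_{B_1},\dots,\mathbf{1}_{B_k}$.

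The three required verifications are then immediate. First, $|\mc{X}|=Q^k=Q^{rn}$, so setting $a=Q>1$ gives $C=a^{rn}$ as claimed. Second, every vector in $\mc{X}$ is a non-negative integer vector with entries in $\{0,\dots,Q-1\}$ by construction. Third, the $k$ indicator vectors $\mathbf{1}_{B_1},\dots,\mathbf{1}_{B_k}$ are linearly independent (they have pairwise disjoint supports), so the subspace they span has dimension exactly $k$; moreover $\mc{X}$ contains $Q$ scalar multiples of each $\mathbf{1}_{B_\ell}$, so the matrix whose rows are the elements of $\mc{X}$ spans this full $k$-dimensional subspace and has rank exactly $k<n$.

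There is no genuinely hard step here; the only subtlety is divisibility ($k\mid n$), which is a cosmetic issue handled by padding, and making sure that the chosen subspace is indeed of dimension $k$ rather than smaller. I would close by remarking that this is just one family of constructions witnessing the exponential lower bound; any subspace of dimension $k$ whose intersection with the lattice $\{0,\dots,Q-1\}^n$ has size $\Omega(a^{rn})$ for some $a>1$ works, which is why the learning and recall algorithms of earlier sections apply far beyond this toy example.
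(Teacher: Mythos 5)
Your construction is correct: the block-indicator vectors $\mathbf{1}_{B_1},\dots,\mathbf{1}_{B_k}$ have disjoint supports, so the $Q^k=Q^{rn}$ block-constant vectors lie in $\{0,\dots,Q-1\}^n$, span a subspace of dimension exactly $k$, and witness the theorem with $a=Q$. In fact your argument is a special case of the paper's: the paper takes an arbitrary $k\times n$ generator matrix $G$ with entries in $\{0,\dots,\gamma-1\}$ and maximum column weight $d^*$, forms $x^\mu=u^\mu G$ for $u^\mu\in\{0,\dots,\upsilon-1\}^k$, and must then impose the overflow condition $Q-1\geq d^*(\gamma-1)(\upsilon-1)$ to guarantee that all $\upsilon^{rn}$ products stay below $Q$. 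Your choice corresponds to $G$ being the block-indicator matrix, i.e.\ $\gamma=2$ and $d^*=1$, which makes the overflow condition vacuous and lets you take $\upsilon=Q$, so you even get a larger count ($Q^{rn}$ versus $\upsilon^{rn}$ with $\upsilon$ as small as $2$ in the paper's worked example). What the paper's extra generality buys is relevant to the rest of the article rather than to this theorem: the simulations there use a \emph{random sparse} $G$ (e.g.\ $d^*=10$, $0/1$ entries, $Q=11$), whose null space yields the kind of nontrivial sparse constraint graph the learning and recall algorithms need, whereas the block-constant subspace would produce a degenerate constraint structure (simple within-block differences). For the bare existential statement, your proof is complete; the divisibility issue you flag is indeed cosmetic.
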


\begin{proof}
The proof is based on construction: we construct a data set $\mc{X}$ with the required properties. To start, consider a matrix $G \in \mb{R}^{k \times n}$ with rank $k$ and $k = rn$, with
$0 < r < 1$. Let the entries of $G$ be non-negative integers, between $0$ and $\gamma-1$, with $\gamma \geq 2$. 

We start constructing the patterns in the data set as follows: consider a set of random vectors $u^\mu \in \mb{R}^k$, $\mu =1,\dots,C$, with integer-valued entries between $0$ and $\upsilon-1$, where $\upsilon \geq
2$.  We set the pattern $x^\mu \in \mc{X}$ to be $x^\mu = u^\mu \cdot G$, \emph{if} all the entries of $x^\mu$ are between $0$ and $Q-1$. Obviously, since both $u^\mu$ and $G$ have only
non-negative entries, all entries in $x^\mu$ are non-negative. Therefore, it is the $Q-1$ upper bound that we have to worry about. 

The $j^{th}$ entry in $x^\mu$ is equal to $x_j^\mu = u^\mu \cdot G_j$, where $G_j$ is the $j^{th}$ column of $G$. Suppose $G_j$ has $d_j$ non-zero elements. Then, we have: 
\bed
x_j^\mu = u^\mu \cdot G_j \leq d_j (\gamma-1) (\upsilon-1)
\eed

Therefore, denoting $d^* = \max_{j} d_j$, we could choose $\gamma$, $\upsilon$ and $d^*$ such that
\beq\label{capaci_inequal}
Q-1 \geq d^* (\gamma-1) (\upsilon-1)
\eeq
to ensure all entries of $x^\mu$ are less than $Q$. 

As a result, since there are $\upsilon^k$ vectors $u$ with integer entries between $0$ and $\upsilon -1$, we will have $\upsilon^k = \upsilon^{rn}$ patterns forming $\mc{X}$. Which means $C
= \upsilon^{rn}$, which would be an exponential number in $n$ if $\upsilon \geq 2$. 
\end{proof}

As an example, if $G$ can be selected to be a sparse $200\times 400$ matrix with $0/1$ entries (i.e. $\gamma = 2$) and $d^* = 10$, and $u$ is also chosen to be a vector with $0/1$ elements
(i.e. $\upsilon = 2$), then it is sufficient to choose $Q\geq 11$ to have a pattern retrieval capacity of $C = 2^{rn}$. 

\begin{rem}
Note that inequality (\ref{capaci_inequal}) was obtained for the worst-case scenario and in fact is very loose. Therefore, even if it does not hold, we will still be able to memorize a
very large number of patterns since a big portion of the generated vectors $x^\mu$ will have entries less than $Q$. These vectors correspond to the message vectors $u^\mu$ that are "sparse"
as well, i.e. do not have all entries greater than zero. The number of such vectors is a polynomial in $n$, the degree of which depends on the number of non-zero entries in $u^\mu$.
\end{rem}
\section{Simulation Results}\label{section_simulations}

\subsection{Simulation Scenario}
We have simulated the proposed learning and recall algorithms for three different network sizes $n = 200,400,800$, with $k = n/2$ for all cases. For each case, we considered a few different
setups with different values for $\alpha$, $\eta$, and $\theta$ in the learning algorithm \ref{algo_learning}, and different $\varphi$ for the Majority-Voting recall algorithm
\ref{algo_correction_expander_majority_vote}. For brevity, we do not report all the results for various combinations but present only a selection of them to give insight on the performance of the proposed algorithms. 

In all cases, we generated $50$ random training sets using the approach explained in the proof of theorem \ref{theorem_exponential_solution}, i.e. we generated a generator matrix $G$ at
random with $0/1$ entries and $d^* = 10$. We also used $0/1$ generating message words $u$ and put $Q = 11$ to ensure the validity of the generated training set. 

However, since in this setup we will have $2^{k}$ patterns to memorize, doing a simulation over all of them would take a lot of time. Therefore, we have selected a random sample sub-set $\mc{X}$
each time with size $C = 10^5$ for each of the $50$ generated sets and used these subsets as the training set. 

For each setup, we performed the learning algorithm and then investigated the average sparsity of the learned constraints over the ensemble of $50$ instances. As explained earlier, all the
constraints for each network were learned in parallel, i.e. to obtain $m = n-k$ constraints, we executed Algorithm \ref{algo_learning} from random initial points $m$ time. 

As for the recall algorithms, the error correcting performance was assessed for each set-up, averaged over the ensemble of $50$ instances. The empirical results are compared to the
theoretical bounds derived in Section \ref{section_error_analysis} as well. 


\subsection{Learning Phase Results}
In the learning algorithm, we pick a pattern from the training set each time and adjust the weights according to Algorithm \ref{algo_learning}. Once we have gone over all the patterns, we repeat this operation several times to make sure that update for one pattern does not adversely affect the other learned patterns. Let $t$ be the iteration number of the learning algorithm, i.e. the number of times we have gone over the
training set so far. Then we set $\alpha_t \propto \alpha_0/t$ to ensure the conditions of Theorem~\ref{lemma_convergence} is satisfied. Interestingly, all of the constraints converged in
at most two learning iterations for all different setups. Therefore, the learning is very fast in this case. 

Figure \ref{sparisty_percentage} illustrates the percentage of pattern nodes with the specified sparsity measure defined as $\varrho = \kappa/n$, where $\kappa$ is the number of non-zero
elements. From the figure we notice two trends. The first is the effect of sparsity threshold, which as it is increased, the network becomes sparser. The second one is the effect of
network size, which as it grows, the connections become sparser. 

\begin{figure}[h]
\begin{center}
\includegraphics[width=.5\textwidth]{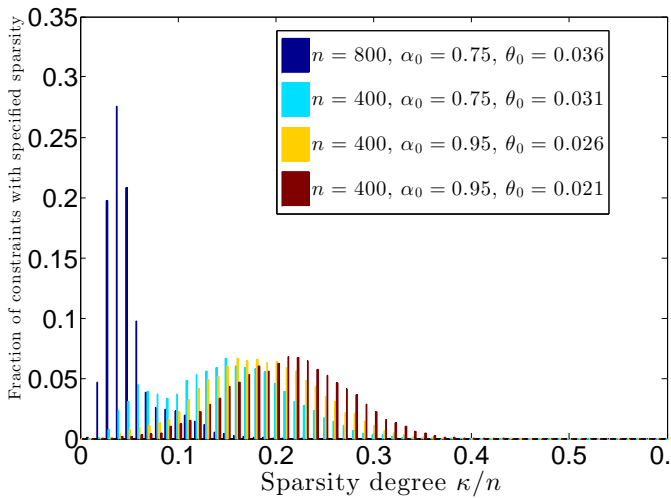}
\end{center}
\begin{center}
\caption{The percentage of variable nodes with the specified sparsity measure and different values of network sizes and sparsity thresholds. The sparsity measure is defined as $\varrho =
\kappa/n$, where $\kappa$ is the number of non-zero elements.\label{sparisty_percentage}}
\end{center}
\end{figure}


\subsection{Recall Phase Results}
For the recall phase, in each trial we pick a pattern randomly from the training set, corrupt a given number of its symbols with $\pm 1$ noise and use the suggested algorithm to correct the
errors. A pattern error is declared if the output does not match the correct pattern. We compare the performance of the two recall algorithms: Winner-Take-All (WTA) and Majority-Voting (MV). Table \ref{simul_param_recall} shows the simulation parameters in the recall phase for all scenarios (unless specified otherwise).  
\begin{table}[h]
\begin{center}
\label{simul_param_recall}
\caption{Simulation parameters}
\begin{tabular}{ |c| c| c|c|c| }
  \hline                       
  Parameter & $\varphi$ & $t_{\max}$ & $\varepsilon$ & $\eta$ \\ \hline
  Value & $1$ & $ 20 \Vert z \Vert_0 $ & $0.001$ & $1$\\
  \hline  
\end{tabular}
\end{center}
\end{table}

Figure \ref{PER_effect_of_theta_n_400_k_200} illustrates the effect of the sparsity threshold $\theta$ on the performance of the error correcting algorithm in the recall phase. Here, we have $n=400$ and $k = 200$. Two different sparsity thresholds are compared together, namely $\theta_t \propto 0.031/t$ and $\theta_t \propto 0.021/t$. Clearly, as network becomes sparser, i.e. $\theta$ increases, the performance of both recall algorithms improve. 
\begin{figure}[b]
\begin{center}
\includegraphics[width=.5\textwidth]{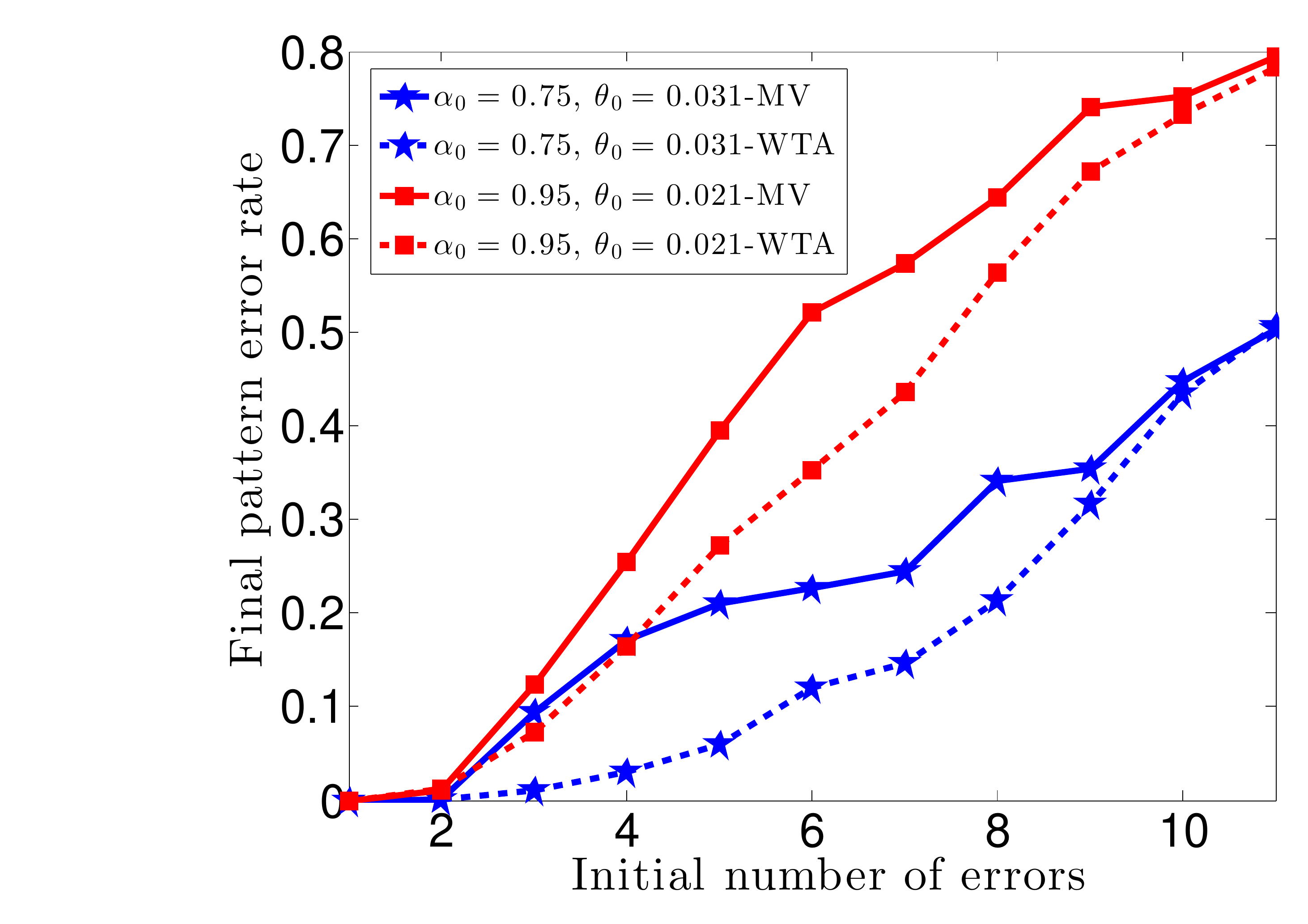}
\end{center}
\begin{center}
\caption{Pattern error rate against the initial number of erroneous nodes for two different values of $\theta_0$. Here, the network size is $n=400$ and $k =200$. The blue curves correspond to the sparser network (larger $\theta_0$) and clearly show a better performance. \label{PER_effect_of_theta_n_400_k_200}}
\end{center}
\end{figure}

In Figure \ref{PER_effect_of_size_800_400} we have investigated the effect of network size on the performance of recall algorithms by comparing the pattern error rates for two different network size, namely $n=800$ and $n=400$ with $k=n/2$ in both cases. As obvious from the figure, the performance improves to a great extent when we have a larger network. This is partially because of the fact that in larger networks, the connections are relatively sparser as well.
\begin{figure}[h]
\begin{center}
\includegraphics[width=.5\textwidth]{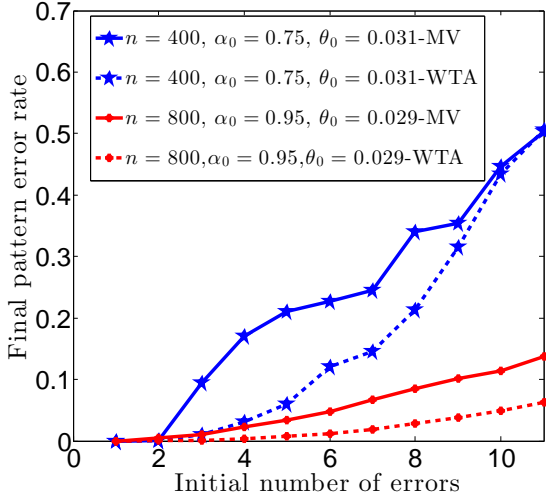}
\end{center}
\begin{center}
\caption{Pattern error rate against the initial number of erroneous nodes for two different network sizes $n=800$ and $k=400$. In both cases $k =n/2$. \label{PER_effect_of_size_800_400}}
\end{center}
\end{figure}

Figure~\ref{PER_theory} compares the results obtained in simulation with the upper bound derived in Section~\ref{section_error_analysis}. Note that as expected, the bound is quite loose since in
deriving inequality (\ref{P_e}) we only considered the first iteration of the algorithm.
\begin{figure}[b]
\begin{center}
\includegraphics[width=.5\textwidth]{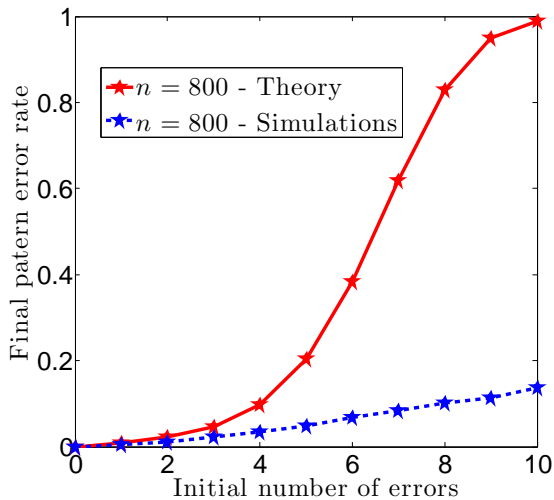}
\end{center}
\begin{center}
\caption{Pattern error rate against the initial number of erroneous nodes and comparison with theoretical upper bounds for $n = 800$, $k=400$, $\alpha_0 = 0.95$ and $\theta_0 =
0.029$.\label{PER_theory}}
\end{center}
\end{figure}

We have also investigated the tightness of the bound given in equation (\ref{final_P_E}) with simulation results. To this end, we compare $P_e(1)$ and $\lim_{t \rightarrow \infty} P_e(t)$ in our simulations for the case of $\pm 1$ noise. Figure~\ref{1st_itr_error_rate} illustrates the result and it is evident that allowing the recall algorithm to iterate improves the final probability of error to a great extent.
\begin{figure}[b]
\begin{center}
\includegraphics[width=.5\textwidth]{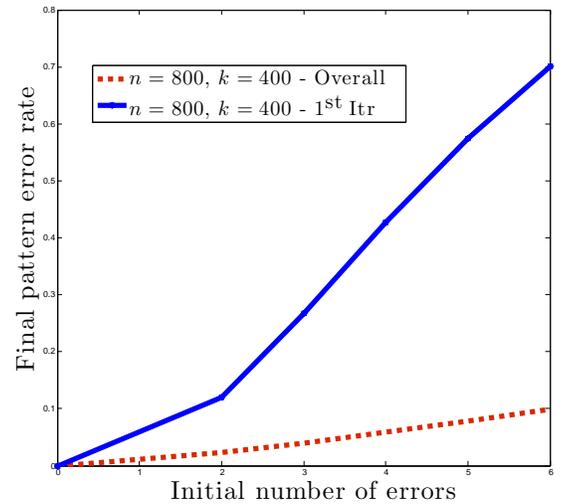}
\end{center}
\begin{center}
\caption{Pattern error rate in the first and last iterations against the initial number of erroneous nodes for $n = 800$, $k=400$, $\alpha_0 = 0.95$, $\theta_0 =
0.029$ and $\varphi = 0.99$.\label{1st_itr_error_rate}}
\end{center}
\end{figure}

Finally, we investigate the performance of the modified more practical version of the Majority-Voting algorithm, which was explained in Section~\ref{section_practical}. Figure~\ref{PER_effect_of_algorithm_200} compares the performance of the WTA and original MV algorithms with the modified version of MV algorithm for a network with size $n=200$, $k=100$ and learning parameters $\alpha_t \propto 0.45/t$, $\eta = 0.45$ and $\theta_t \propto 0.015/t$. The neural graph of this particular example is rather dense, because of small $n$ and sparsity threshold $\theta$. Therefore, here the modified version of the Majority-Voting algorithm performs better because of the extra information provided by the $\ell_1$-norm (than the $\ell_0$-norm in the original version of the Majority-Voting algorithm). However, note that we did not observe this trend for the other simulation scenarios where the neural graph was sparser.
\begin{figure}[h]
\begin{center}
\includegraphics[width=.5\textwidth]{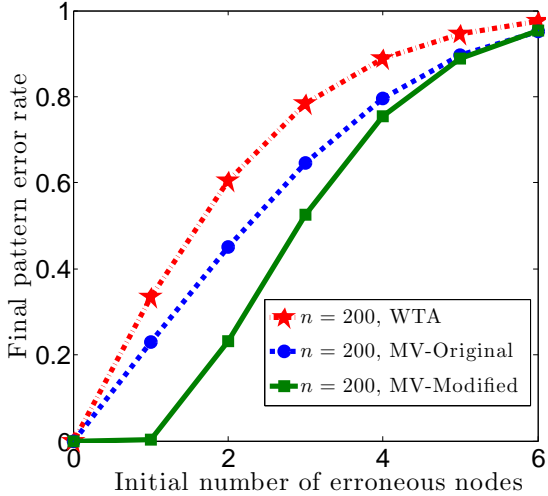}
\end{center}
\begin{center}
\caption{Pattern error rate against the initial number of erroneous nodes for two different values of $\theta_0$. Here, the network size is $n=400$ and $k =200$. The blue curves correspond to the sparser network (larger $\theta_0$) and clearly show a better performance. \label{PER_effect_of_algorithm_200}}
\end{center}
\end{figure}



\section{Conclusions and Future Works}\label{section_conclusion}
In this paper, we proposed a neural associative memory which is capable of exploiting inherent redundancy in input patterns to enjoy an exponentially large pattern retrieval capacity.
Furthermore, the proposed method uses simple iterative algorithms for both learning and recall phases which makes gradual learning possible and maintain rather good recall performances.
The convergence of the proposed learning algorithm was proved using techniques from stochastic approximation. We also analytically investigated the performance of the recall algorithm by
deriving an upper bound on the probability of recall error as a function of input noise. Our simulation results confirms the consistency of the theoretical results with those obtained in
practice, for different network sizes and learning/recall parameters.

Improving the error correction capabilities of the proposed network is definitely a subject of our future research. We have already started investigating this issue and proposed a different
network structure which reduces the error correction probability by a factor of $10$ in many cases \cite{SK_ISIT2012}. We are working on different structures to obtain even more robust
recall algorithms.

Extending this method to capture other sorts of redundancy, i.e. other than belonging to a subspace, will be another topic which we would like to explore in future.

Finally, considering some practical modifications to the learning and recall algorithms is of great interest. One good example is simultaneous learn and recall capability, i.e. to have a
network which learns a subset of the patterns in the subspace and move immediately to the recall phase. Now during the recall phase, if the network is given a noisy version of the patterns
previously memorized, it eliminates the noise using the algorithms described in this paper. However, if it is a new pattern, i.e. one that we have not learned yet, the network adjusts the
weights in order to learn this pattern as well. Such model is of practical interest and closer to real-world neuronal networks. Therefore, it would be interesting to design a network with
this capability while maintaining good error correcting capabilities and large pattern retrieval capacities.

\section*{Acknowledgment}
The authors would like to thank Prof. Wulfram Gerstner and his lab members, as well as Mr. Amin Karbasi for their helpful comments and discussions. This work was supported by Grant
228021-ECCSciEng of the European Research Council.

\appendices
\section{Average neighborhood size}\label{appendix_neighbor_size}
In this appendix, we find an expression for the average neighborhood size for erroneous nodes, $S_t = \mb{E}(|\mc{N}(\mc{E}_t)|)$. Towards this end, we assume the following procedure for
constructing a right-irregular bipartite graph:
\bit
\item In each iteration, we pick a variable node $x$ with a degree randomly determined according to the given degree distribution.
\item Based on the given degree $d_x$, we pick $d_x$ constraint nodes uniformly at random \emph{with replacement} and connect $x$ to the constraint node.
\item We repeat this process $n$ times, until all variable nodes are connected. 
\eit
Note that the assumption that we do the process with replacement is made to simplify the analysis. This assumption becomes more exact as $n$ grows. 

Having the above procedure in mind, we will find an expression for the average number of constraint nodes in each construction round. More specifically, we will find the average number of
constraint nodes connected to $i$ pattern nodes at round $i$ of construction. This relationship will in turn yields the average neighborhood size of $|\mc{E}_t|$ erroneous nodes in
iteration $t$ of error correction algorithm described in section \ref{section_recall}.

With some abuse of notations, let $S_e$ denote the number of constraint nodes connected to pattern nodes in round $e$ of construction procedure mentioned above. We write $S_e$ recursively
in terms of $e$ as follows:
\beqa
S_{e+1} &=& \mb{E}_{d_x} \{ \sum_{j = 0}^{d_x} {d_x \choose j} \left(\frac{S_e}{m} \right)^{d_x - j} \left(1-\frac{S_e}{m} \right)^{j} (S_e + j) \} \nonumber \\
&=& \mb{E}_{d_x} \{  S_e + d_x(1-S_e/m)  \} \nonumber\\
&=& S_e + \bar{d} (1-S_e/m)
\eeqa
Where $\bar{d} = \mb{E}_{d_x}\{d_x\}$ is the average degree of the pattern nodes. In words, the first line calculates the average growth of the neighborhood when a new variable node is
added to the graph. The proceeding equalities directly follows from relationship on binomial sums. Noting that $S_1 = \bar{d}$, one obtains:
\beqa\label{neighbor_size}
S_t = m\left(1-(1-\frac{\bar{d}}{m})^{|\mc{E}_t|}\right)
\eeqa

In order to verify the correctness of the above analysis, we have performed some simulations for different network sizes and degree distributions obtained from the graphs returned by the
learning algorithm. We generated $100$ random graphs and calculated the average neighborhood size in each iteration over these graphs. Furthermore, two different network sizes were
considered $n=100,200$ and $m=n/2$ in all cases, where $n$ and $m$ are the number of pattern and constraint nodes, respectively. The result for $n = 100,m=50$ is shown in Figure
\ref{neighbor_size_fig_100}, where the average neighborhood size in each iteration is illustrated and compared with theoretical estimations given by equation (\ref{neighbor_size}). Figure
\ref{neighbor_size_fig_200} shows similar results for $n = 200$, $m=100$.
\begin{figure}[h]
\begin{center}
\includegraphics[width=.50\textwidth]{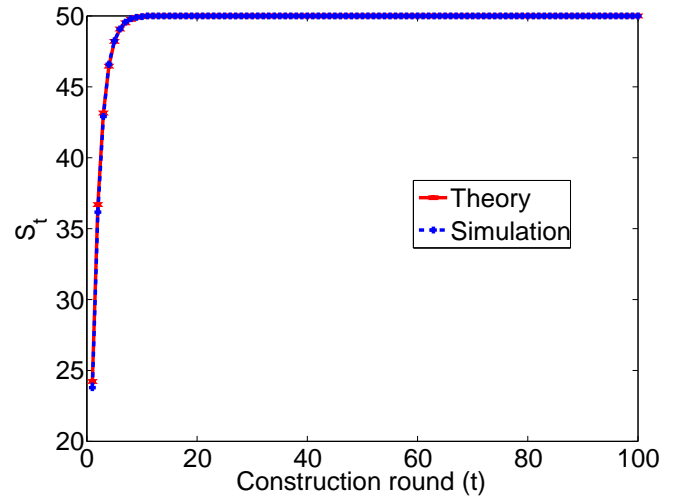}
\end{center}
\begin{center}
\caption{The theoretical estimation and simulation results for the neighborhood size of irregular graphs with a given degree-distribution for $n=100$, $m=50$ and over $2000$ random
graphs.\label{neighbor_size_fig_100}}
\end{center}
\end{figure}
In the figure,the dashed line shows the average neighborhood size over these graphs. The solid line corresponds to theoretical estimations. It is obvious that the theoretical value
is an exact approximation of the simulation results.

\begin{figure}[h]
\begin{center}
\includegraphics[width=.50\textwidth]{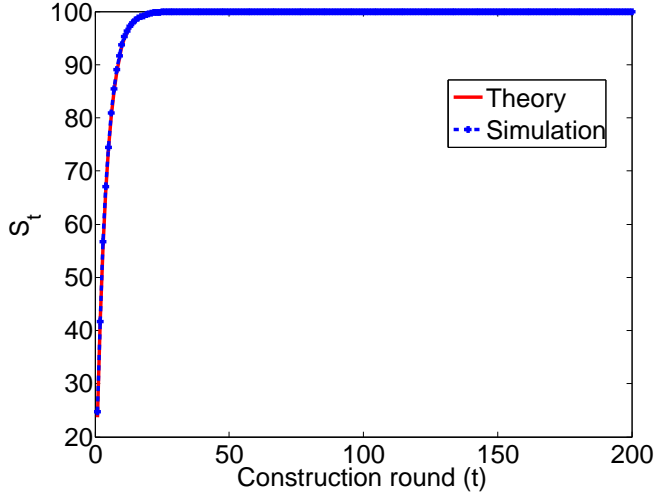}
\end{center}
\begin{center}
\caption{The theoretical estimation and simulation results for the neighborhood size of irregular graphs with a given degree-distribution for $n=200$, $m=100$ and over $2000$ random
graphs.\label{neighbor_size_fig_200}}
\end{center}
\end{figure}

\section{Expander Graphs}\label{appendix_expander}
This section contains the definitions and the necessary background on expander graphs. 
\begin{definition}\label{bipartite}
A regular $(d_p,d_c,n,m)$ bipartite graph $W$ is a
  bipartite graph between $n$ pattern nodes of degree $d_p$ and $m$ 
  constraint nodes of degree $d_c$.
\end{definition}

\begin{definition}
An $(\alpha n, \beta d_p)$-expander is a $(d_p,d_c,n,m)$ bipartite
  graph such that for any subset $\mc{P}$ of pattern nodes with
  $|\mc{P}|<\alpha n$ we have $|\mc{N}(\mc{P})| > \beta d_p|\mc{P}|$ where
  $\mc{N}(\mc{P})$ is the set of neighbors of $\mc{P}$ among the constraint nodes.
\end{definition}

The following result from \cite{SipSpi} shows the existence of families of expander graphs with parameter values that are relevant to us.
\begin{theorem} \cite{SipSpi}
Let $W$ be a randomly chosen $(d_p,d_c)-$regular bipartite graph between $n$ $d_p-$regular vertices and $m = (d_p/d_c)$ $d_c-$regular vertices. Then for all $0 < \alpha < 1$, with high
probability, all sets of $\alpha n$ $d_p-$regular vertices in $W$ have at least
\[ n \left( \frac{d_p}{d_c} ( 1-(1-\alpha)^{d_c} ) - \sqrt{\frac{2d_c\alpha h(\alpha)}{\log_2 e}} \right) \]
neighbors, where $h(\cdot)$ is the binary entropy function. 
\end{theorem}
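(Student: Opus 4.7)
The plan is to prove this by the standard probabilistic-method recipe for random bipartite graphs: compute the expected neighborhood size of a fixed subset, prove concentration around that expectation, and then take a union bound over subsets. I would work in the configuration model, in which each of the $n$ pattern nodes is assigned $d_p$ half-edges and each of the $m = n d_p/d_c$ constraint nodes is assigned $d_c$ half-edges, and $W$ is drawn as a uniformly random perfect matching between the two sets of half-edges; the theorem's probability statement is for this model.

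First I would fix a set $S$ of pattern nodes of size $s = \alpha n$ and compute $\mathbb{E}[|\mathcal{N}(S)|]$. Any fixed constraint node $c$ has $d_c$ half-edges; by exchangeability of the matching, each half-edge lands in the $s d_p = \alpha \cdot n d_p$ half-edges of $S$ with probability $\alpha$, and a short calculation (or the standard configuration-model asymptotic) gives $\Pr[c \notin \mathcal{N}(S)] = (1-\alpha)^{d_c}$ up to lower-order corrections. Summing over constraint nodes,
\[
\mathbb{E}\bigl[|\mathcal{N}(S)|\bigr] \;=\; m\bigl(1-(1-\alpha)^{d_c}\bigr) \;=\; \frac{n d_p}{d_c}\bigl(1-(1-\alpha)^{d_c}\bigr),
\]
which is exactly the leading term in the theorem.

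Next I would prove concentration of $|\mathcal{N}(S)|$ around this mean. The cleanest route is an edge-exposure (Doob) martingale: reveal the constraint half-edges one at a time in a canonical order, so that $|\mathcal{N}(S)|$ is a Lipschitz function of the $m d_c$ matched pairs with bounded per-step change. Azuma--Hoeffding then yields a one-sided tail of the form $\Pr\bigl[|\mathcal{N}(S)| < \mathbb{E}[|\mathcal{N}(S)|] - t\bigr] \le \exp(-t^2/(2V))$ for a variance proxy $V$ proportional to $\alpha n d_c$; an equivalent route is to observe that the indicators $Y_c = \mathbb{1}[c \notin \mathcal{N}(S)]$ are negatively associated and apply Chernoff directly, which produces the same scaling with sharper constants. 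Finally I union-bound over all $\binom{n}{\alpha n} \le 2^{n h(\alpha)}$ possible sets $S$: requiring the tail to beat $2^{-n h(\alpha)}$ forces $t^2/(2V) \ge n h(\alpha)\ln 2$, and using $\ln 2 = 1/\log_2 e$ together with $V = \alpha n d_c$ gives precisely $t = n\sqrt{2 d_c \alpha h(\alpha)/\log_2 e}$. Subtracting this deviation from the expected neighborhood size yields the claimed bound, and the probability that some $S$ violates it is at most $2^{n h(\alpha)} \cdot 2^{-n h(\alpha)}$ times a sub-exponentially small factor, which vanishes.

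The main obstacle is pinning down the exact coefficient in the deviation term. The shape of the argument (expectation $+$ martingale $+$ union bound) is routine, but getting $2 d_c$ rather than, say, $2 d_p$ inside the square root requires selecting the correct filtration for the martingale (exposing constraint-side half-edges, so that the variance proxy scales with $\alpha \cdot m d_c = \alpha n d_p$ but the per-step Lipschitz constant is effectively reduced after accounting for the $(1-\alpha)^{d_c}$ mean of each $Y_c$), or equivalently using a Bernstein/Chernoff bound tailored to indicators with this small success probability. Once that calibration is done, the rest of the proof is bookkeeping against the entropy estimate $\binom{n}{\alpha n} \le 2^{n h(\alpha)}$.
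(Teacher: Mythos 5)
The paper never proves this statement: it is imported verbatim (up to notation) from Sipser and Spielman's expander-codes paper, so there is no internal proof to compare against. Your overall recipe --- compute $\mathbb{E}\left[|\mathcal{N}(S)|\right] = m\left(1-(1-\alpha)^{d_c}\right)$ in the configuration model, prove concentration via an edge-exposure martingale, and union-bound against ${n \choose \alpha n}\le 2^{n h(\alpha)}$ --- is exactly the argument in that source, and the expectation and union-bound steps of your sketch are sound; in fact $\Pr[c\notin\mathcal{N}(S)]\le(1-\alpha)^{d_c}$ holds exactly in the matching model, so the expectation bound needs no lower-order correction.

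The gap is the step you flagged yourself and then dismissed as bookkeeping: the variance proxy. The martingale you describe does not yield $V=\alpha n d_c$. Exposing all $m d_c = n d_p$ constraint-side half-edges with per-step Lipschitz constant $1$ gives $V = n d_p$; exposing only the $\alpha n d_p$ half-edges emanating from $S$ (the standard and correct filtration, since only those pairings can affect $\mathcal{N}(S)$) gives $V=\alpha n d_p$ --- and your own parenthetical concedes this when you write $\alpha\cdot m d_c = \alpha n d_p$, which contradicts your earlier claim that $V\propto \alpha n d_c$. Carrying $V=\alpha n d_p$ through your calibration against $2^{n h(\alpha)}$ yields a deviation of $n\sqrt{2 d_p\,\alpha\, h(\alpha)/\log_2 e}$, i.e.\ the \emph{pattern-side} degree under the root, which is what Sipser and Spielman's lemma actually asserts (their $\sqrt{2c\alpha h(\alpha)/\log e}$ has $c$ equal to the left degree). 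The $d_c$ in the statement as transcribed in this paper appears to be a typo (as is the missing factor of $n$ in $m=(d_p/d_c)$), and no choice of filtration, nor a Chernoff bound for the negatively associated indicators $Y_c$, will manufacture $\alpha n d_c$ as the variance proxy: the natural alternatives give $n d_p$, $\alpha n d_p$, or $m=n d_p/d_c$, never $\alpha n d_c$. So your proof is architecturally the right one and closes cleanly against the corrected statement with $d_p$ inside the square root, but as written it cannot produce the literal bound quoted above, and the "calibration" you defer is precisely the point at which the argument fails.
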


The following result from \cite{BurMil} shows the existence of families of expander graphs with parameter values that are relevant to us.
\begin{theorem} \label{th:expander_existance}
Let $d_c$, $d_p$, $m$, $n$ be integers, and let $\beta < 1 -
  1/d_p$. There exists a small $\alpha > 0$ such that if $W$ is a
  $(d_p,d_c,n,m)$ bipartite graph chosen uniformly at random from the
  ensemble of such bipartite graphs, then $W$ is an $(\alpha n, \beta
  d_p)$-expander with probability $1-o(1)$, where $o(1)$ is a term
  going to zero as $n$ goes to infinity.
\end{theorem}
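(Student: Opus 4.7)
The plan is to prove Theorem~\ref{th:expander_existance} by the probabilistic method combined with a union bound over "bad" subsets. Using the configuration model, I draw a uniformly random $(d_p,d_c,n,m)$-bipartite graph by placing $n d_p$ half-edges on the pattern side, $m d_c = n d_p$ half-edges on the constraint side, and choosing a uniformly random perfect matching between them. The target is to bound
\[
\Pr\bigl[\exists\, \mc{P}\subseteq [n]:\ |\mc{P}|\le \alpha n\ \text{and}\ |\mc{N}(\mc{P})|\le \beta d_p|\mc{P}|\bigr]
\]
and to show that, for a sufficiently small $\alpha>0$, this probability tends to zero as $n\to\infty$.

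First I would fix a set $\mc{P}$ of pattern nodes of size $s$ and a set $\mc{T}$ of constraint nodes of size $t=\lfloor\beta d_p s\rfloor$, then estimate the probability $P_{s,t}$ that every one of the $s d_p$ half-edges leaving $\mc{P}$ is matched into $\mc{T}$. Via the standard falling-factorial computation in the configuration model,
\[
P_{s,t} \;\le\; \left(\frac{t d_c}{n d_p}\right)^{s d_p} \;=\; \left(\frac{t}{m}\right)^{s d_p}.
\]
Applying a union bound over all choices of $\mc{P}$ and $\mc{T}$, and using $\binom{n}{s}\le(en/s)^s$ and $\binom{m}{t}\le(em/t)^t$, the failure probability is at most
\[
\sum_{s=1}^{\alpha n}\left(\frac{en}{s}\right)^{s}\left(\frac{em}{\beta d_p s}\right)^{\beta d_p s}\left(\frac{\beta d_p s}{m}\right)^{s d_p}.
\]

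The next step is an asymptotic analysis of the generic term. Writing $\gamma=s/n$ and substituting $m=n d_p/d_c$, the logarithm of the generic term divided by $s$ reduces to
\[
\bigl[\,1-\log\gamma\,\bigr] \;+\; \beta d_p\bigl[1-\log(\beta d_c\gamma)\bigr] \;+\; d_p\log(\beta d_c\gamma).
\]
Collecting terms, the $\log\gamma$ coefficient is $-1+d_p(1-\beta)$, which is \emph{strictly positive} precisely under the hypothesis $\beta<1-1/d_p$. Consequently, as $\gamma\to 0^{+}$ the whole expression is dominated by $\bigl(d_p(1-\beta)-1\bigr)\log\gamma\to-\infty$. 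Thus there exists $\alpha_0>0$ such that for every $\gamma\le\alpha_0$ the bracketed exponent is bounded above by a negative constant $-c(\beta,d_p,d_c)$, giving a per-term bound of the form $e^{-c s}$. Summing the geometric-type series from $s=1$ to $s=\alpha_0 n$ then produces a bound that is $o(1)$ in $n$; taking $\alpha$ to be any positive number less than $\alpha_0$ completes the argument.

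The main obstacle will be handling the small-$s$ regime cleanly: the asymptotic analysis above describes what happens for $s$ of order $\alpha n$, but the exponent has to remain negative uniformly for all $1\le s\le\alpha n$, and near $s=1$ the Stirling-type upper bounds on binomial coefficients are loose. The remedy is to treat $s=O(1)$ separately, where the bound $P_{s,t}\le(\beta d_c s/n)^{s d_p}$ already yields a polynomial-in-$1/n$ contribution per term and there are only $O(n^s)$ such subsets, leaving an overall $O(n^{-c'})$ contribution. A secondary technical point, which I would verify but do not expect to be difficult, is that the configuration-model failure probability bounds the simple-graph failure probability up to a factor bounded away from zero (since $d_p,d_c$ are constants); this is the standard transfer argument from the configuration model to the uniform model on simple regular bipartite graphs.
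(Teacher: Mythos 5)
The paper does not actually prove Theorem~\ref{th:expander_existance}; it imports the statement verbatim from the cited reference \cite{BurMil}, so there is no in-paper argument to compare against. Your proposal supplies the standard first-moment proof (configuration model, union bound over pairs $(\mc{P},\mc{T})$ with $|\mc{T}|=\lfloor\beta d_p|\mc{P}|\rfloor$, Stirling-type bounds, and the observation that the coefficient of $\log\gamma$ in the per-node exponent is $d_p(1-\beta)-1>0$ exactly when $\beta<1-1/d_p$), which is essentially the argument in the cited source and in the classical literature on random bipartite expanders. Your exponent computation checks out, the containment of the bad event $|\mc{N}(\mc{P})|\le\beta d_p|\mc{P}|$ in $\bigcup_{\mc{T}}\{\mc{N}(\mc{P})\subseteq\mc{T}\}$ is valid because $|\mc{N}(\mc{P})|$ is an integer, and the transfer from the configuration model to the uniform simple-graph ensemble is indeed routine for bounded $d_p,d_c$. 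The one place where your write-up is quantitatively glossed is the final summation: a bound of the form $e^{-cs}$ per term gives $\sum_{s\ge s_0}e^{-cs}=O(e^{-cs_0})$, which for a \emph{fixed} constant cutoff $s_0$ is only a small constant, not $o(1)$; treating ``$s=O(1)$ separately'' therefore does not by itself yield failure probability tending to zero. The fix is already implicit in your own bounds: take a cutoff growing with $n$ (say $s_0=\log n$), note that each term equals $\bigl[e^{C}(s/n)^{\epsilon}\bigr]^{s}$ with $\epsilon=d_p(1-\beta)-1>0$, so the terms with $s\le\log n$ sum to $O\!\bigl((\log n/n)^{\epsilon}\bigr)$ while the terms with $s>\log n$ sum to $O(n^{-c})$, giving the claimed $1-o(1)$. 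With that adjustment the proof is complete and correct.
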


\section{Analysis of the Recall Algorithms for Expander Graphs}\label{appendix_expander_graph}
\subsection{Analysis of the Winner-Take-All Algorithm}

We prove the error correction capability of the winner-take-all algorithm in two steps: first we
show that in each iteration, only pattern neurons that are corrupted by
noise will be chosen by the winner-take-all strategy to update their state. Then, we prove that the update is in
the right direction, i.e. toward removing noise from the neurons. 
\begin{lem}\label{lemma_correct_nodes_winner_take_all}
If the constraint matrix $W$ is an $(\alpha n, \beta d_p)$ expander, with $\beta > 1/2$,
and the original number of erroneous neurons are less than or equal to $2$, then in each iteration of
the winner-take-all algorithm only the corrupted pattern nodes
update their value and the other nodes remain intact. For $\beta = 3/4$, the
algorithm will always pick the correct node if we have two or fewer
erroneous nodes.
\end{lem}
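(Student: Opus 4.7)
The plan is a two-step argument driven by the expansion property. Let $\mathcal{E}$ denote the current set of noisy pattern nodes with $|\mathcal{E}| \leq 2$, and let $\mathcal{V}$ be the set of constraint neurons receiving a non-zero weighted input sum. Under the non-cancellation assumption in force for reasonably bounded errors, $\mathcal{V} = \mathcal{N}(\mathcal{E})$, so $g^{(2)}_j = |\mathcal{N}(j) \cap \mathcal{N}(\mathcal{E})|/d_p$ for every pattern neuron $j$.

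First I would show that only an erroneous node can be a winner. For every $x \in \mathcal{E}$ all $d_p$ edges incident to $x$ land inside $\mathcal{N}(\mathcal{E})$, so $g^{(2)}_x = 1$. For a correct neuron $y$, suppose for contradiction that $g^{(2)}_y = 1$, i.e.\ $\mathcal{N}(y) \subseteq \mathcal{N}(\mathcal{E})$. Then $\mathcal{N}(\mathcal{E} \cup \{y\}) = \mathcal{N}(\mathcal{E})$, and applying the $(\alpha n, \beta d_p)$-expansion bound to $\mathcal{E} \cup \{y\}$ yields
\[
\beta d_p (|\mathcal{E}|+1) \;<\; |\mathcal{N}(\mathcal{E})| \;\leq\; d_p\,|\mathcal{E}|,
\]
forcing $\beta < |\mathcal{E}|/(|\mathcal{E}|+1)$. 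For $|\mathcal{E}|=1$ this contradicts $\beta > 1/2$, and for $|\mathcal{E}|=2$ it contradicts the $\beta = 3/4$ regime relevant to the second half of the lemma. Hence the WTA rule updates only an erroneous neuron.

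Second, for the direction-of-update claim at $\beta = 3/4$, let $x \in \mathcal{E}$ be the winner and $s \in \{\pm 1\}$ the sign of its noise. I would split $\mathcal{N}(x)$ into the unique neighbors $\mathcal{N}(x) \setminus \mathcal{N}(\mathcal{E} \setminus \{x\})$ and the shared neighbors $\mathcal{N}(x) \cap \mathcal{N}(\mathcal{E} \setminus \{x\})$. At a unique neighbor $i$ the input sum $h_i$ is controlled solely by the noise on $x$, so $W^b_{ix}\,y_i = -s$ and the vote is correct; at a shared neighbor the vote can be adversarially $\pm 1$. Using expansion on $\mathcal{E}$, the overlap is at most $|\mathcal{E}|\,d_p - |\mathcal{N}(\mathcal{E})| < 2(1-\beta)d_p$, so the unique-neighbor count exceeds $(2\beta - 1)d_p$. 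At $\beta = 3/4$ this is strictly more than $d_p/2$ while the shared count is strictly less, giving $\hbox{sign}(g^{(1)}_x) = -s$ and an update that strictly reduces the noise magnitude at $x$.

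The main obstacle I anticipate is the tightness of the counting at $\beta = 3/4$ with $|\mathcal{E}| = 2$: the unique and shared neighbor counts meet essentially in the middle, so the strict majority has to be extracted from the \emph{strict} inequality in the expansion definition $|\mathcal{N}(\mathcal{P})| > \beta d_p |\mathcal{P}|$, leaving almost no slack. A secondary concern is justifying the non-cancellation assumption, since a single cancellation at a constraint neuron breaks the identification $\mathcal{V} = \mathcal{N}(\mathcal{E})$ and weakens the lower bound on the number of correct feedback votes in both steps.
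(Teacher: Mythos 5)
Your proposal is correct and rests on the same engine as the paper's proof: apply the expansion bound to the error set augmented by a candidate clean winner. The difference is in how the conclusion is extracted. The paper derives explicit edge counts --- $d_{p'} < 2(1-\beta)d_p$ for the overlap of the two noisy nodes, and $d_{p_\ell} < 3d_p(1-\beta) - d_{p'}$ for the feedback reaching a clean node $x_\ell$ --- and then compares these quantities against the feedback reaching the noisy nodes. You instead observe that every noisy node saturates $g^{(2)} = 1$ while a clean node can tie only if its entire neighborhood lies inside $\mathcal{N}(\mathcal{E})$, which expansion forbids; this is shorter and skips the counting, though it forfeits the quantitative bound on $d_{p_\ell}$ that the paper reuses verbatim in its majority-voting theorem. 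Your remark that the two-error case only yields a contradiction for $\beta > 2/3$ (hence the $\beta = 3/4$ clause) is accurate and in fact more careful than the paper's own write-up, whose final step (``since $2d_p - d_{p'} > 3d_p(1-\beta) - d_{p'}$ for $\beta > 1/2$, we conclude $d_p - d_{p'} > d_{p_\ell}$'') is a non sequitur as stated and really requires $\beta \geq 2/3$ as well. Your second step on the direction of the update is not part of this lemma --- the paper isolates it in the subsequent lemma, using exactly the unique-versus-shared neighbor count you give --- but it is correct and does no harm. Finally, the non-cancellation assumption you flag is adopted explicitly by the paper (real-valued weights against integer noise), so you may simply invoke it rather than justify it.
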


\begin{proof}
If we have only one node $x_i$ in error, it is obvious that the
corresponding node will always be the winner of the winner-take-all
algorithm unless there exists another node that has the same set of
neighbors as $x_i$. However, this is impossible as because of the
expansion properties, the neighborhood of these two nodes must have
at least $2\beta d_p$ members which for $\beta >1/2$ is strictly greater than $d_p$. As a result, no two nodes can have the same neighborhood
and the winner will always be the correct node.

In the case where there are two erroneous nodes, say $x_i$ and $x_j$,
let $\mc{E}$ be the set $\{x_i,  x_j\}$ and $\mc{N}(\mc{E})$ be the corresponding
neighborhood on the constraint nodes side. Furthermore, assume $x_i$ and $x_j$ share $d_{p'}$ of
their neighbors so that $|\mc{N}(\mc{E})| = 2d_p - d_{p'}$. Now because of the expansion properties:
\begin{displaymath}
|\mc{N}(\mc{E})| = 2d_p - d_{p'} > 2\beta d_p \Rightarrow d_{p'} < 2(1-\beta) d_p.
\end{displaymath}

Now we have to show that there are no nodes other than $x_i$ and
$x_j$ that can be the winner of the winner-take-all algorithm. To
this end, note that only those nodes that are connected to $N(\mc{E})$
will receive some feedback and can hope to be the winner of the process.
So let's consider such a node $x_\ell$ that is connected to
$d_{p_\ell}$ of the nodes in $N(\mc{E})$. Let $\mc{E}'$ be $\mc{E} \cup \{x_\ell \}$ and
$N(\mc{E}')$ be the corresponding neighborhood. Because of the expansion
properties we have $|N(\mc{E}')| = d_p - d_{p_\ell} + |N(\mc{E})|> 3\beta
d_p$. Thus:
\begin{eqnarray*}
d_{p_\ell} &<& d_p + |N(\mc{E})| -3\beta d_p = 3d_p(1-\beta) - d_{p'}.
\end{eqnarray*}
Now, note that the nodes $x_i$ and $x_j$ will receive some feedback
from $2d_p - d_{p'}$ edges because we assume there is no noise cancellation due to the fact that neural weights are real-valued and noise entries are integers. Since $2d_p - d_{p'} >
3d_p(1-\beta) - d_{p'}$ for $\beta >1/2$, we conclude that $d_p -
d_{p'} > d_{p_\ell}$ which proves that no node outside $\mc{E}$ can be
picked during the winner-take-all algorithm as long as $|\mc{E}| \leq 2$
for $\beta > 1/2$.
\end{proof}

In the next lemma, we show that the state of erroneous neurons is updated 
in the direction of reducing the noise.
\begin{lem}\label{lemma_direction_winner_take_all}
If the constraint matrix $W$ is an $(\alpha n, \beta d_p)$ expander, with $\beta > 3/4$,
and the original number of erroneous neurons is less than or equal $e_{\min} = 2$, then in each iteration of
the winner-take-all algorithm the winner is updated toward reducing
the noise.
\end{lem}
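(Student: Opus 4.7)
The plan is to show that when $j^*$ is the winner of Algorithm~\ref{algo_correction_expander_winner}, $\hbox{sign}(g^{(1)}_{j^*}) = -z_{j^*}$, so that the additive update exactly cancels the noise at the winner. Because Lemma~\ref{lemma_correct_nodes_winner_take_all} (whose hypothesis $\beta > 1/2$ is a fortiori satisfied) already guarantees that the winner lies in $\mc{E}$, I may assume $j^* = i$ for some $x_i \in \mc{E}$ with $z_i \in \{\pm 1\}$. Since $Wx^\mu = 0$, each constraint input reduces to $h_k = \sum_{\ell \in \mc{E}} W_{k\ell} z_\ell$. The case $|\mc{E}| = 1$ is immediate: every $k \in \mc{N}(x_i)$ satisfies $h_k = W_{ki} z_i$, so $y_k = -\hbox{sign}(W_{ki})\, z_i$ and $W^b_{ki} y_k = -z_i$, giving $g^{(1)}_i = -z_i$. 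I would then focus on $|\mc{E}| = 2$, writing $\mc{E} = \{x_i, x_j\}$ and letting $d_{p'}$ denote the number of common neighbors.

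Next I would partition $\mc{N}(x_i)$ into the $d_p - d_{p'}$ neighbors unique to $x_i$ and the $d_{p'}$ shared with $x_j$, and track the contribution $W^b_{ki} y_k$ to $d_p\, g^{(1)}_i$ on each part. The unique part behaves as in the single-error case, contributing $-z_i$ per neighbor. On the shared part, however, $h_k = W_{ki} z_i + W_{kj} z_j$ is a signed real sum whose sign depends on the relative magnitudes of $W_{ki}$ and $W_{kj}$; since only the \emph{signs} of the weights re-enter the backward pass through $W^b$, the contribution $W^b_{ki} y_k$ lies in $\{-1, 0, +1\}$ and in the worst case equals $+z_i$. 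Summing then yields the lower bound $-z_i \cdot d_p\, g^{(1)}_i \geq (d_p - d_{p'}) - d_{p'} = d_p - 2 d_{p'}$.

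The final step is to invoke the neighborhood estimate proved inside Lemma~\ref{lemma_correct_nodes_winner_take_all}, namely $d_{p'} < 2(1-\beta) d_p$. Substituting $\beta > 3/4$ forces $d_{p'} < d_p/2$, so $d_p - 2 d_{p'} > 0$; hence $\hbox{sign}(g^{(1)}_i) = -z_i$ and the update $x_i \leftarrow x_i + \hbox{sign}(g^{(1)}_i) = x_i - z_i$ strictly reduces the noise, as required. The principal obstacle, and exactly the reason the expansion factor must jump from $1/2$ in the previous lemma to $3/4$ here, is the possibility that a shared constraint delivers the \emph{wrong} sign to the backward signal; absorbing up to $d_{p'}$ such adversarial contributions against the $d_p - d_{p'}$ unambiguously correct ones is what demands the strictly stronger bound $d_{p'} < d_p/2$, and hence $\beta > 3/4$.
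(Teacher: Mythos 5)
Your argument is correct and takes essentially the same route as the paper's own proof: both invoke the bound $d_{p'} < 2(1-\beta)d_p$ established in Lemma~\ref{lemma_correct_nodes_winner_take_all}, observe that $\beta > 3/4$ forces the unique neighbors of each erroneous node to outnumber the shared ones ($d_{p'} < d_p/2$), and conclude that the unanimous correct feedback on the unique part dominates any adversarial contribution from the shared part, fixing the sign of $g^{(1)}$. The only cosmetic difference is that you fix $z_i \in \{\pm 1\}$ whereas the lemma is meant to hold for arbitrary integer noise; replacing $z_i$ by $\hbox{sign}(z_i)$ throughout your accounting recovers the general case verbatim.
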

\begin{proof}
When there is only one erroneous node, it is obvious that all its neighbors
agree on the direction of update and the node reduces the amount of
noise by one unit.

If there are two nodes $x_i$ and $x_j$ in error, since the number of
their shared neighbors is less than $2(1-\beta)d_p$ (as we proved in the
last lemma), then more than half of their neighbors would be unique if $\beta \geq 3/4$. These unique neighbors agree on the
direction of update. Therefore, whoever the winner is will be
updated to reduce the amount of noise by one unit.
\end{proof}

The following theorem sums up the results of the previous lemmas to
show that the winner-take-all algorithm is guaranteed to perform error correction.
\begin{theorem}\label{theorem_correctness_winner_take_all}
If the constraint matrix $W$ is an $(\alpha n, \beta d_p)$ expander, with $\beta \geq 3/4$,
then the winner-take-all algorithm is guaranteed to correct at least
$e_{\min} = 2 $ positions in error, irrespective of the magnitudes of the errors.
\end{theorem}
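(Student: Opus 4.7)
The plan is to combine Lemmas~\ref{lemma_correct_nodes_winner_take_all} and~\ref{lemma_direction_winner_take_all} with an induction on a suitable potential function that measures the total remaining noise. Under the hypothesis $\beta \geq 3/4$, both lemmas are simultaneously applicable whenever the current number of erroneous pattern nodes is at most~$2$, so the main structural task is to verify that this ``at most $2$ errors'' invariant is preserved throughout the recall process.

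First I would fix an initial noise vector $z$ with support size $|\mathcal{E}_0| \leq 2$ and arbitrary integer magnitudes, and define the potential $\Phi(t) = \sum_{j} |z_j^{(t)}|$, where $z^{(t)}$ is the residual noise at the start of iteration~$t$. Applying Lemma~\ref{lemma_correct_nodes_winner_take_all} with $\beta \geq 3/4 > 1/2$, I get that the winner $j^\star$ selected in iteration~$t$ lies in the current error set $\mathcal{E}_t$, so no correct pattern node is disturbed. Applying Lemma~\ref{lemma_direction_winner_take_all} with $\beta \geq 3/4$, the winner's state is modified by $+1$ or $-1$ in the direction opposite to its residual noise, so $|z^{(t+1)}_{j^\star}| = |z^{(t)}_{j^\star}| - 1$ and all other residual entries are unchanged. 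Consequently $\Phi(t+1) = \Phi(t) - 1$.

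Next I would observe that the support $\mathcal{E}_{t+1}$ is either $\mathcal{E}_t$ (if the winner still had residual noise after the update) or $\mathcal{E}_t \setminus \{j^\star\}$ (if $|z^{(t)}_{j^\star}|=1$). In both cases $|\mathcal{E}_{t+1}| \leq |\mathcal{E}_t| \leq 2$, so the hypotheses of the two lemmas remain valid at the next iteration, and one can iterate the argument. Since $\Phi$ is a non-negative integer that strictly decreases as long as $\Phi>0$, after at most $\Phi(0) = \|z\|_1$ iterations we reach $\Phi = 0$, meaning $z^{(t)} = 0$ and the original pattern is recovered. This is independent of the magnitudes of the entries of $z$, which only affect the number of iterations, not correctness.

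The main obstacle is really just the invariance step: one has to be sure that the update rule cannot introduce new errors or flip the winner's sign past zero by more than one unit. This is handled by the fact that each iteration changes exactly one coordinate by exactly $\pm 1$ in the direction dictated by $\mathrm{sign}(g^{(1)}_{j^\star})$, which Lemma~\ref{lemma_direction_winner_take_all} identifies with $-\mathrm{sign}(z^{(t)}_{j^\star})$. A minor point to address explicitly is the no-noise-cancellation assumption used inside Lemma~\ref{lemma_correct_nodes_winner_take_all}: since weights are real-valued and residual noise entries remain integers throughout, the assumption carries over unchanged from one iteration to the next, so the two lemmas may indeed be reinvoked at every step.
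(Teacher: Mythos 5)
Your proposal is correct and takes essentially the same route as the paper, which simply declares the theorem ``immediate'' from Lemmas~\ref{lemma_correct_nodes_winner_take_all} and~\ref{lemma_direction_winner_take_all}. Your added potential-function bookkeeping (showing $\sum_j |z_j^{(t)}|$ strictly decreases while the error support never grows, so the lemmas can be re-invoked at every iteration) is exactly the detail the paper leaves implicit, and it is a welcome clarification of why ``irrespective of the magnitudes of the errors'' holds.
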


\begin{proof}
The proof is immediate from Lemmas~\ref{lemma_correct_nodes_winner_take_all} and \ref{lemma_direction_winner_take_all}.
\end{proof}

\subsection{Analysis of the Majority Algorithm}

Roughly speaking, one would expect the Majority-Voting algorithm to be sub-optimal in comparison to the winner-take-all strategy, since the pattern neurons need to make independent decisions,
and are not allowed to cooperate amongst themselves. In this subsection, we show that despite this restriction, the Majority-Voting algorithm is capable of error correction; the sub-optimality
in comparison to the winner-take-all algorithm can be quantified in terms of a larger expansion factor $\beta$ being required for the graph.

\begin{theorem}\label{theorem_correctness_bit_flipping}
If the constraint matrix $W$ is an $(\alpha n, \beta d_p)$ expander with $\beta > \frac{4}{5}$,
then the Majority-Voting algorithm with $\varphi = \frac{3}{5}$ is guaranteed to correct at least
two positions in error, irrespective of the magnitudes of the errors.
\end{theorem}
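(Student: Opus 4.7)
The plan is to mimic the two-lemma structure used for the Winner-Take-All proof (Lemmas on the identity of updated nodes and on the correctness of the update direction), but replace the ``uniqueness of the maximum'' argument by a strict quantitative bound on the fraction $g^{(2)}_j$ of feedback received by a non-erroneous pattern node. The parameters $\beta > 4/5$ and $\varphi = 3/5$ are tuned precisely so that erroneous nodes exceed the voting threshold while all correct nodes fall strictly below it.

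First, I would handle the single-error case as a warm-up: the erroneous node $x_i$ receives feedback from all $d_p$ of its neighbors (so $g^{(2)}_{x_i} = 1 \geq \varphi$ and it updates in the correct direction), while for any other node $x_\ell$ the expansion inequality applied to $\{x_i, x_\ell\}$ yields $|\mc{N}(x_\ell) \cap \mc{N}(x_i)| < 2(1-\beta) d_p < 2d_p/5 < \varphi d_p$, so $x_\ell$ stays put. The interesting case is $|\mc{E}| = 2$, say $\mc{E} = \{x_i, x_j\}$ with $d_{p'} = |\mc{N}(x_i)\cap \mc{N}(x_j)|$. Expansion applied to $\mc{E}$ gives $d_{p'} < 2(1-\beta) d_p$, and since no noise cancellation occurs every constraint in $\mc{N}(\mc{E})$ is violated, which means both $x_i$ and $x_j$ receive feedback from all $d_p$ of their neighbors. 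Thus $g^{(2)}_{x_i} = g^{(2)}_{x_j} = 1 \geq \varphi$, so both erroneous nodes are eligible to update.

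Next I would show that no correct node crosses the threshold. Take any $x_\ell \notin \mc{E}$ and let $d_{p_\ell} = |\mc{N}(x_\ell) \cap \mc{N}(\mc{E})|$, which is the only source of feedback $x_\ell$ can see. Applying expansion to $\mc{E}' = \mc{E} \cup \{x_\ell\}$ and using $|\mc{N}(\mc{E})| = 2d_p - d_{p'}$ gives
\begin{equation*}
3\beta d_p < |\mc{N}(\mc{E}')| = (2d_p - d_{p'}) + (d_p - d_{p_\ell}),
\end{equation*}
so $d_{p_\ell} < 3(1-\beta) d_p - d_{p'} \leq 3(1-\beta) d_p$. With $\beta > 4/5$ this yields $d_{p_\ell}/d_p < 3/5 = \varphi$, hence $x_\ell$ does not update. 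Finally, for the direction of the update on each erroneous node, I would note that at least $d_p - d_{p'}$ of its neighbors are unique and therefore vote unambiguously with the sign of the local noise, whereas at most $d_{p'}$ shared constraints may dissent; the net signed feedback is at least $(d_p - d_{p'}) - d_{p'} = d_p - 2d_{p'} > (4\beta - 3) d_p > 0$ when $\beta > 3/4$, which is implied by $\beta > 4/5$. Hence $\mathrm{sign}(g^{(1)}_{x_i})$ equals the sign of the noise and the update strictly reduces the error.

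Combining these three ingredients shows that in one iteration the algorithm moves only erroneous nodes and moves them in the noise-reducing direction, so the number of errors strictly decreases until it reaches zero. The main obstacle is tracking the inequalities tightly enough to pin down the exact pair $(\beta, \varphi) = (4/5, 3/5)$: the threshold $\varphi$ must simultaneously (i) be at most $1$ so erroneous nodes trigger, (ii) strictly exceed $3(1-\beta)$ so correct nodes do not trigger, and (iii) be compatible with $\beta > 3/4$ for direction correctness. The choice $\beta > 4/5$ is exactly what makes constraint (ii) force $\varphi \geq 3/5$ while leaving (i) and (iii) trivially satisfied.
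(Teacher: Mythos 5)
Your proposal is correct and follows essentially the same route as the paper: the same expansion bounds $d_{p'} < 2(1-\beta)d_p$ and $d_{p_\ell} < 3(1-\beta)d_p - d_{p'}$ applied to $\mc{E}$ and $\mc{E}\cup\{x_\ell\}$, the same threshold comparison showing correct nodes fall below $\varphi = 3/5$ while erroneous nodes receive full feedback, and the same unique-versus-shared neighbor count for the update direction. The only cosmetic difference is that you fold the paper's same-sign/opposite-sign case split for the two noise values into a single worst-case count $d_p - 2d_{p'} > 0$, which is equivalent.
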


\begin{proof}
As in the proof for the winner-take-all case, we will show our result in two steps: first, by showing that for a suitable choice of the Majority-Voting threshold $\varphi$, that only the
positions in error are updated in each iteration, and that this update is towards reducing the effect of the noise.

\paragraph{Case 1}
First consider the case that only one pattern node $x_i$ is in error. Let $x_j$ be any other pattern node, for some $j \neq i$. Let $x_i$ and $x_j$ have $d_{p'}$ neighbors in common. As
argued in the proof of Lemma~\ref{lemma_correct_nodes_winner_take_all}, we have that
\begin{equation} \label{eq:d_p'}
d_{p'} < 2 d_p (1-\beta). 
\end{equation}
Hence for $\beta = \frac{4}{5}$, $x_i$ receives non-zero feedback from at least $\frac{3}{5}d_p$ constraint nodes, while $x_j$ receives non-zero feedback from at most $\frac{2}{5}d_p$
constraint nodes. In this case, it is clear that setting $\varphi = \frac{3}{5}$ will guarantee that only the node in error will be updated, and that the direction of this update is towards
reducing the noise.

\paragraph{Case 2}
Now suppose that two distinct nodes $x_i$ and $x_j$ are in error. Let $\mc{E} = \{ x_i, x_j \}$, and let $x_i$ and $x_j$ share $d_{p'}$ common neighbors. If the noise corrupting these two
pattern nodes, denoted by $z_i$ and $z_j$, are such that $\hbox{sign}(z_i) = \hbox{sign}(z_j)$, then both $x_i$ and $x_j$ receive $-\hbox{sign}(z_i)$ along all $d_p$ edges that they are
connected to during the backward iteration. Now suppose that $\hbox{sign}(z_i) \neq \hbox{sign}(z_j)$. Then $x_i$ ($x_j$) receives correct feedback from at least the $d_p - d_{p'}$ edges in
$\mc{N}(\{x_i\}) \backslash \mc{E}$ (resp. $\mc{N}(\{x_j\}) \backslash \mc{E}$) during the backward iteration. Therefore, if $d_{p'} < d_p/2$, the direction of update would be also correct and the
feedback will reduce noise during the update. And from equation (\ref{eq:d_p'}) we know that for $\beta = 4/5$, $d_{p'} \leq 2d_p/5 < d_p/2$. Therefore, the two noisy nodes will be
updated towards the correct direction.

Let us now examine what happens to a node $x_\ell$ that is different from the two erroneous nodes $x_i, x_j$. Suppose that $x_\ell$ is connected to $d_{p_\ell}$ nodes in $\mc{N}(\mc{E})$. From
the proof of Lemma~\ref{lemma_correct_nodes_winner_take_all}, we know that
\begin{eqnarray*} 
d_{p_\ell} &<& 3 d_p(1-\beta) - d_{p'}\\
&\leq& 3 d_p(1-\beta).
\end{eqnarray*}
Hence $x_\ell$ receives at most $3 d_p(1-\beta)$ non-zero messages during the backward iteration.

For $\beta > \frac{4}{5}$, we have that $d_p - 2d_p (1-\beta) > 3d_p (1-\beta)$. Hence by setting $\beta = \frac{4}{5}$ and $\varphi = [d_p - 2d_p (1-\beta)]/d_p = \frac{3}{5}$, it is clear
from the above discussion that we have ensured the following in the case of two erroneous pattern nodes:
\begin{itemize}
\item The noisy pattern nodes are updated towards the direction of reducing noise.
\item No pattern node other than the erroneous pattern nodes is updated.
\end{itemize}
\end{proof}

\subsection{Minimum Distance of Patterns}
Next, we present a sufficient condition such that the minimum Hamming distance\footnote{Two (possibly non-binary) $n-$length vectors $x$ and $y$ are said to be at a Hamming distance $d$
from each other if they are coordinate-wise equal to each other on all but $d$ coordinates.} between these exponential number of patterns is not too small. In order to prove such a result,
we will exploit the expansion properties of the bipartite graph $W$; our sufficient condition will be in terms of a lower bound on the parameters of the expander graph.
\begin{theorem} \label{th:min_dist} 
Let $W$ be a $(d_p,d_c,n,m)-$regular bipartite graph, that is an $(\alpha n, \beta d_p)$ expander. Let $\mc{X}$ be the set of patterns corresponding to the expander weight matrix $W$. If
\[ \beta > \frac{1}{2} + \frac{1}{4 d_p}, \]
then the minimum distance between the patterns is at least $\lfloor \alpha n \rfloor + 1$.
\end{theorem}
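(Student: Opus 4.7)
The plan is a standard unique-neighbor argument that turns the expansion hypothesis into a contradiction with $Wx=0$. Suppose for contradiction that two distinct patterns $x^1,x^2\in\mc{X}$ differ in at most $\lfloor\alpha n\rfloor$ coordinates, and set $x=x^1-x^2$ with support $\mc{S}\subset\{1,\dots,n\}$. Because $\mc{X}$ is (by construction) contained in the null space of $W$, we have $Wx=0$; moreover $x$ is integer-valued and $|\mc{S}|\le\lfloor\alpha n\rfloor<\alpha n$ (the strict inequality is immediate when $\alpha n\notin\mb{Z}$, and in the integer case one reduces to $|\mc{S}|\le\alpha n-1$).

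Next I would apply the $(\alpha n,\beta d_p)$-expansion of $W$ to $\mc{S}$, giving $|\mc{N}(\mc{S})|>\beta d_p|\mc{S}|$, and then split the neighborhood into $U$, the set of constraint nodes with exactly one edge into $\mc{S}$, and $M$, those with at least two. Counting edges out of $\mc{S}$ in two ways:
\begin{displaymath}
|U|+2|M|\le d_p|\mc{S}|,\qquad |U|+|M|>\beta d_p|\mc{S}|.
\end{displaymath}
Subtracting the second from the first gives $|M|<(1-\beta)d_p|\mc{S}|$, and substituting back yields
\begin{displaymath}
|U|>(2\beta-1)d_p|\mc{S}|.
\end{displaymath}
The hypothesis $\beta>\tfrac12+\tfrac1{4d_p}$ implies $(2\beta-1)d_p>\tfrac12$, whence $|U|>|\mc{S}|/2\ge 1/2$, so in fact $|U|\ge 1$. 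Hence there is some constraint node $c$ whose unique neighbor in $\mc{S}$ is a pattern node $i$.

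To close the proof I would read off the $c$-th coordinate of $Wx$: every other term in the sum $\sum_j W_{c,j}x_j$ vanishes, either because $W_{c,j}=0$ (no edge from $c$ to $j$) or because $x_j=0$ (we have $j\notin\mc{S}$). The one surviving term is $W_{c,i}x_i$, which is non-zero since $c$ is adjacent to $i$ (so $W_{c,i}\neq 0$) and $i\in\mc{S}$ (so $x_i\neq 0$). This contradicts $Wx=0$, forcing the minimum distance to be at least $\lfloor\alpha n\rfloor+1$. The main (minor) obstacle is confirming that the unique neighbor actually yields a non-zero entry — here it is crucial that pattern entries are integer-valued and the graph is weighted over $\mb{R}$, so no accidental cancellation can occur within the single surviving term. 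The factor $\tfrac1{4d_p}$ in the expansion bound is precisely what is needed to turn the real-valued lower bound $(2\beta-1)d_p|\mc{S}|$ on $|U|$ into the integer lower bound $|U|\ge 1$ in the worst case $|\mc{S}|=1$.
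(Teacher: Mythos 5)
Your proof is correct and takes essentially the same route as the paper's: both reduce the claim to exhibiting a unique neighbor of the support $\mc{S}$ of the difference of two close patterns, via the same bound $|U| > (2\beta-1)d_p|\mc{S}|$ on the unique-neighbor set. The only differences are cosmetic — you derive that bound by explicit double counting where the paper cites an external lemma, and you close by integrality of $|U|$ where the paper invokes the threshold inequality $2 d_p d\left(\beta - \tfrac{1}{2}\right) > 1$.
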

\begin{proof}
Let $d$ be less than $\alpha n$, and $W_i$ denote the $i^{th}$
column of $W$. If two patterns are at Hamming distance $d$ from each
other, then there exist non-zero integers $c_1, c_2,
\dots, c_d$ such that  
\begin{equation} \label{eq:linear_comb}
c_1 W_{i_1} + c_2 W_{i_2} + \cdots + c_d W_{i_d} = 0, 
\end{equation}
where $i_1,\dots, i_d$ are distinct integers between $1$ and $n$. Let $\mc{P}$ denote any set of pattern nodes of the graph represented by $W$, with $|\mc{P}| = d$. As in \cite{hassibi}, we
divide $\mc{N}(\mc{P})$ into two disjoint sets: $\mc{N}_{unique}(\mc{P})$ is the set of nodes in $\mc{N} (\mc{P})$ that are connected to only one edge emanating from $\mc{P}$, and
$\mc{N}_{shared} (\mc{P})$ comprises the remaining nodes of $\mc{N} (\mc{P})$ that are connected to more than one edge emanating from $\mc{P}$.  If we show that $|\mc{N}_{unique} (\mc{P})| > 0$
for all $\mc{P}$ with $|\mc{P}| = d$, then (\ref{eq:linear_comb}) cannot hold, allowing us to conclude that no two patterns with distance $d$ exist. Using the arguments in \cite[Lemma
1]{hassibi}, we obtain that
\[ |\mc{N}_{unique} (\mc{P})| > 2 d_p |\mc{P}| \left( \beta - \frac{1}{2} \right). \]
Hence no two patterns with distance $d$ exist if
\[ 2 d_p d \left( \beta - \frac{1}{2} \right) > 1 \Leftrightarrow \beta > \frac{1}{2} + \frac{1}{2 d_p d}. \]
By choosing $\beta > \frac{1}{2} + \frac{1}{4 d_p}$, we can hence ensure that the minimum distance between patterns is at least $\lfloor \alpha n \rfloor + 1$.
\end{proof}
\subsection{Choice of Parameters}
In order to put together the results of the previous two subsections and obtain a neural associative scheme that stores an exponential number of patterns and is capable of error correction,
we need to carefully choose the various relevant parameters. We summarize some design principles below.

\begin{itemize}
\item From Theorems~\ref{th:expander_existance} and \ref{th:min_dist}, the choice of $\beta$ depends on $d_p$, according to
$ \frac{1}{2} + \frac{1}{4 d_p} < \beta < 1 - \frac{1}{d_p}$.

\item Choose $d_c, Q,\upsilon, \gamma$ so that Theorem~\ref{theorem_exponential_solution} yields an exponential number of patterns.

\item For a fixed $\alpha$, $n$ has to be chosen large enough so that
  an $(\alpha n, \beta d_p)$ expander exists according to
  Theorem~\ref{th:expander_existance}, with $\beta \geq 3/4$ and so that $\alpha n/2 \geq
  e_{\min} = 2$.

\end{itemize}

Once we choose a judicious set of parameters according to the above requirements, we
have a neural associative memory that is guaranteed to recall an exponential number of patterns even if the input is corrupted by errors in two coordinates. Our simulation results will
reveal that a greater number of errors can be corrected in practice.

\bibliographystyle{IEEEtran} 
\bibliography{Neural_ref}
\end{document}